\documentclass[twoside]{article}
\usepackage[most]{tcolorbox}
\usepackage{mdframed} %
\usepackage{graphicx}
\usepackage{xcolor} 

\usepackage{amsthm}
\usepackage{amsmath}
\usepackage{amsfonts}
\usepackage{float}
\usepackage[colorlinks=true, allcolors=blue]{hyperref}
\usepackage{amsmath,amssymb}
\newcommand{\E}{\mathbb{E}}


\newtheorem{theorem}{Theorem}
\newtheorem{lemma}{Lemma}

\newtheorem{corollary}{Corollary}
\newtheorem{assumption}{Assumption}

\theoremstyle{definition}
\newtheorem{innerdefinition}{Definition}
\newenvironment{definition}
  {\begin{mdframed}[backgroundcolor=gray!10, linewidth=0pt]\begin{innerdefinition}}
  {\end{innerdefinition}\end{mdframed}}

%
%

%
\usepackage[preprint]{aistats2026}
%


\usepackage[round]{natbib}

\bibliographystyle{apalike}

\begin{document}

%

%

\twocolumn[

\aistatstitle{Geometric Convergence Analysis of Variational Inference via Bregman Divergences}

\aistatsauthor{ Sushil Bohara \And Amedeo Roberto Esposito
}

\aistatsaddress{MBZUAI, Abu Dhabi, UAE \And  OIST, Okinawa, Japan} ]

\begin{abstract}
 Variational Inference (VI) provides a scalable framework for Bayesian inference by optimizing the Evidence Lower Bound (ELBO), but convergence analysis remains challenging due to the objective's non-convexity and non-smoothness in Euclidean space. We establish a novel theoretical framework for analyzing VI convergence by exploiting the exponential family structure of distributions. We express negative ELBO as a Bregman divergence with respect to the log-partition function, enabling a geometric analysis of the optimization landscape. We show that this Bregman representation admits a weak monotonicity property that, while weaker than convexity, provides sufficient structure for rigorous convergence analysis. By deriving bounds on the objective function along rays in parameter space, we establish properties governed by the spectral characteristics of the Fisher information matrix. Under this geometric framework, we prove non-asymptotic convergence rates for gradient descent algorithms with both constant and diminishing step sizes. 
\end{abstract}

\section{Introduction}
Variational inference (VI) allows us to approximate intractable posterior distributions in probabilistic models by formulating Bayesian inference as an optimization problem \citep{blei2017variational, jordan1999introduction, hoffman2013stochastic}.
The primary goal is to compute the posterior distribution of latent variables $z$ given the observed data $x$. According to Bayes' rule, this posterior is given by:
\[
p(z \mid x) = \frac{p(x, z)}{p(x)}
\]
where $p(x, z)$ is the joint distribution. However, computing this posterior is often intractable in practice. The primary difficulty lies in computation of the marginal likelihood, or evidence, term in the denominator:
\[
p(x) = \int p(x, z)\, dz
\]
This integral requires considering all possible configurations of the latent variables, which is computationally infeasible for high-dimensional or complex models.

Instead of calculating the true posterior $p(z \mid x)$ directly, VI suggests using a simpler distribution $q_{\phi}(z)$ from a parameterized family (for example, a Gaussian). The goal is to find the member of this family that is closest to the true posterior. One way to measure this closeness is through the Kullback-Leibler (KL) divergence, $D_{KL}(q_{\phi}(z) \| p(z\mid x))$ \citep{kullback1951, 
cover2006}. It is well known that minimizing this divergence is the same as maximizing a lower bound on the log-evidence, which is called the Evidence Lower Bound (ELBO) or minimizing negative ELBO \citep{jordan1999introduction,  blei2017variational}.  

In this paper, we work with the exponential family of distributions. Although this family has a rich geometric structure through its natural parameterization, the optimization problem still poses challenges. The negative ELBO objective is usually non-convex in the natural parameters, which complicates theoretical convergence analysis. Thus, we observe a gap between the impressive empirical success of natural gradient methods in VI and theoretical guarantees from the classical convex optimization theory. To bridge this theory-practice gap, we need to better understand the geometric structure that explains the fast convergence despite non-convexity. Our work tackles this need with a new ray-wise geometric analysis that uncovers hidden local structure in the negative ELBO. This analysis provides convergence guarantees for natural gradient methods without relying on global convexity assumptions.

\subsection{Contributions}

Our work makes the following contributions:

\begin{enumerate}
    \item We show that the negative ELBO admits a global lower bound at any point through a monotonicity property that provides useful structure beyond convexity (Theorem~\ref{prop:monotonicity}, Figure~\ref{fig:elbo-geometry}).  
    \item We establish ray-wise spectral bounds that yield two-sided quadratic control of the negative ELBO objective (Theorem~\ref{thm:ray-bounds}, Figure~\ref{fig:ray-geometry}).  
    \item We derive non-asymptotic convergence guarantees for both gradient descent and natural gradient descent under constant and diminishing step sizes, fully based on our geometric framework (Section~\ref{sec:convergence}).  
    \item We introduce an integral representation that links the objective to Fisher-spectrum averages along rays (Theorem~\ref{thm:integral-rep}) and derive one-point PL-type inequalities with ray-dependent constants (Theorem~\ref{thm:one-point}).  
\end{enumerate}

\section{Related Work}

\subsection{Empirical Success vs. Theoretical Understanding}

Variational inference has demonstrated remarkable success in a variety of applications from deep generative models \citep{kingma2014vae} to topic modeling \citep{blei2003} and natural gradient methods have emerged as the method of choice for VI optimization \citep{salimans2013, khan2018}. At the same time, the scalability of these techniques has been further demonstrated by stochastic variational inference \citep{hoffman2013stochastic}, which enables VI in large datasets through mini-batch optimization.

However, the success of these optimization techniques in practice is not well understood theoretically. Although we do have some understanding, conventional VI convergence analysis frequently provides only asymptotic guarantees without finite-time rates \citep{wang2005} or assumes conditions such as global strong convexity \citep{xu2018} which do not typically hold in practice. Information geometry offers some insight into natural gradient methods \citep{amari1998}, but rigorous convergence analysis has mainly focused on particular cases \citep{lin2019} and depend on constants which can't be calculated easily \citep{nemirovsky1983, nesterov2018}. 

This discrepancy between theory and practice is particularly evident in the gradient descent setting. As a result of insufficient theory, effective hyperparameter selection and the development of reliable algorithms have been hampered by the lack of explicit convergence rates.

\subsection{Non-convex Optimization}

The main challenge in understanding VI optimization is the non-convexity of the negative ELBO objective. Recent progress in non-convex optimization theory, particularly with Polyak-\L ojasiewicz (PL) inequalities \citep{polyak1963} and their extensions \citep{karimi2016, necoara2019}, has shown that linear convergence is achievable without global strong convexity. However, current PL analyses usually need uniform constants that might be too conservative for VI applications as the variational inference objective requires more local analysis. Building on this insight, we exploit the geometric features of exponential families and develop a more local analysis to derive the convergence rates. 

Bregman divergence analysis \citep{bregman1967} has been applied to mirror descent \citep{nemirovsky1983,beck2003}, but its use in VI has been restricted to approximation quality rather than convergence rates. Likewise, insights from information geometry \citep{amari2016} have guided algorithm design but have not been transformed into concrete convergence theory. We provide a framework that can take advantage of the specific geometric properties of VI objective as Bregman divergence while offering convergence guarantees.

\section{Preliminaries}
In this paper, we study the variational inference optimization problem of minimizing negative ELBO. Let $\Omega \subset \mathbb{R}^d$ denote the parameter space, assumed to be compact. Then, the objective is

\begin{equation} \label{eq:neg_elbo}
    \min_{\phi \in \Omega} L(\phi), 
    \qquad 
    L(\phi) = \mathbb{E}_{q_\phi}\!\left[\log q_\phi(z) - \log p(x,z)\right],
\end{equation}
where $L(\phi)$ is the negative ELBO and $\phi^* = \arg\min_{\phi \in \Omega} L(\phi)$ is the optimal variational parameter. If $\Omega$ is compact and $L$ is continuous, the minimum is attained, so that $\phi^* \in \arg\min_{\phi \in \Omega} L(\phi)$ is well defined.
Throughout, we focus on exponential family distributions, which provide the natural setting for our geometric analysis.

\begin{definition}[Exponential Family]
Let $(Z,\mathcal{F},\nu)$ be a measurable space.  
The exponential family distribution is
\begin{equation}
    q_\phi(z) = h(z)\exp\!\left(\langle \phi, T(z)\rangle - A(\phi)\right),
    \quad \phi \in \Omega \subset \mathbb{R}^d,
\end{equation}
where $h$ is the base measure, $T$ is the sufficient statistic, and $A$ is the log-partition function.  
The mean parameter and Fisher information matrix are
\[
\mu(\phi) = \nabla A(\phi), 
\qquad 
H(\phi) = \nabla^2 A(\phi).
\]
\end{definition}

We express our optimization objective in terms of the Bregman divergence between two distributions of the exponential family. 

\begin{definition}[Bregman Divergence]
Let $f$ be a strictly convex function.  
The Bregman divergence is defined as
\[
D_f(u \| v) = f(u) - f(v) - \langle \nabla f(v), u - v \rangle.
\]
In particular, the log-partition function $A$ induces
\[
D_A(\phi^* \| \phi) = A(\phi^*) - A(\phi) - \langle \mu(\phi), \phi^* - \phi \rangle.
\]
\end{definition}
We use following assumptions as the theoretical benchmark and do the full analysis without using any global convexity. 
\begin{assumption}[]
The true posterior distribution belongs to  the exponential family. 
$A \in C^2(\Omega)$ is strictly convex; and $H(\phi)\succ 0$ for all $\phi \in \Omega$.
\end{assumption}

\begin{assumption}[]
The joint distribution belongs to the same exponential family:
\[
p(x,z) = h(x,z)\exp\!\left(\langle \theta, T(x,z)\rangle - A(\theta)\right),
\quad \theta \in \Theta,
\]
for some natural parameter $\theta$ and sufficient statistic $T(x,z)$.
\end{assumption}

For convergence analysis, we use Euclidean gradient descent and natural gradient descent. 

\begin{definition}[Gradient Descent (GD)]
Given step sizes $\{\gamma_k\}_{k\geq 0}$, the Euclidean gradient descent update is
\[
\phi_{k+1} = \phi_k - \gamma_k \nabla L(\phi_k),
\]
where $\nabla L(\phi)$ is the Euclidean gradient of the negative ELBO.
\end{definition}

\begin{definition}[Natural Gradient Descent (NGD)]
Given step sizes $\{\eta_k\}_{k\geq 0}$, the natural gradient descent update is
\[
\phi_{k+1} = \phi_k - \eta_k H(\phi_k)^{-1}\nabla L(\phi_k),
\]
where $H(\phi) = \nabla^2 A(\phi)$ is the Fisher information matrix. 
\end{definition}

\section{Geometric Structure of the Negative ELBO}

In this section, we establish the geometric structure of the negative ELBO. Our further analysis will not rely on the convexity; rather, it will utilize the geometric structure arising from its exponential-family parameterization established in this section. We will establish two structures: (i) the objective can be represented as a Bregman divergence (Section \ref{sec:bregman}), and (ii) Negative ELBO satisfies a local monotonicity condition around a reference point (Section \ref{sec:monotone}) that ensures sufficient regularity for convergence analysis

\subsection{Bregman Divergence Representation} \label{sec:bregman}

We begin by establishing that the negative ELBO can be expressed as the Bregman divergence for exponential family of distributions. 

\begin{theorem}[Negative ELBO as a Bregman divergence]\label{thm:bregman}
Under Assumptions~1--2, suppose the joint distribution $p(x,z)$ is exponential family in $z$ with natural parameter $\phi^*$ i.e., 
\(
p(x,z) = h(z)\exp\!\big(\langle\phi^*,T(z)\rangle-A(\phi^*)\big).
\)
Let
\(
L(\phi) \;=\; \mathbb{E}_{q_\phi}\!\left[\log q_\phi(z)-\log p(x,z)\right]
\) 
be the negative ELBO, where $q_\phi(z) = h(z)\exp(\langle\phi,T(z)\rangle-A(\phi))$. Then
\[
L(\phi) = D_A\!\left(\phi^*\middle\Vert\phi\right),
\]
where $D_A$ is the Bregman divergence generated by $A$. Moreover,
\begin{align}
\nabla L(\phi) &= H(\phi)\,(\phi-\phi^*) \\
H(\phi)^{-1}\nabla L(\phi) &= \phi-\phi^*, 
\end{align}
where $H(\phi) = \nabla^2 A(\phi)$ is the Fisher information matrix.
\end{theorem}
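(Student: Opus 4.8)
The plan is to reduce everything to two standard facts about exponential families and then proceed by direct computation, avoiding any need to differentiate through the $\phi$-dependent measure $q_\phi$. First I would expand the integrand pointwise: since both $q_\phi$ and $p(x,z)$ share the base measure $h(z)$, the $\log h(z)$ terms cancel and
\[
\log q_\phi(z)-\log p(x,z) = \langle \phi-\phi^*,\,T(z)\rangle - A(\phi) + A(\phi^*).
\]
Taking the expectation under $q_\phi$ and pulling the deterministic terms outside the expectation leaves only $\mathbb{E}_{q_\phi}[T(z)]$ to evaluate.

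The key step is the mean-value identity $\mathbb{E}_{q_\phi}[T(z)] = \nabla A(\phi) = \mu(\phi)$. This follows by differentiating the normalization identity $\int h(z)\exp(\langle\phi,T(z)\rangle - A(\phi))\,d\nu(z) = 1$ under the integral sign; the regularity needed to justify this interchange is supplied by the $C^2$ smoothness of $A$ in Assumption~1. Substituting this identity yields
\[
L(\phi) = A(\phi^*) - A(\phi) - \langle \mu(\phi),\,\phi^*-\phi\rangle,
\]
which is precisely the definition of $D_A(\phi^*\Vert\phi)$, establishing the first claim.

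For the gradient, rather than differentiating the original expectation (which would require score-function arguments because the measure itself depends on $\phi$), I would differentiate the Bregman form directly. The only nontrivial term is $\langle \nabla A(\phi),\,\phi^*-\phi\rangle$; applying the product rule, its gradient is $H(\phi)(\phi^*-\phi) - \nabla A(\phi)$, where $H = \nabla^2 A$. Combining this with the gradient of $-A(\phi)$, the two $\nabla A(\phi)$ contributions cancel and one is left with $\nabla L(\phi) = -H(\phi)(\phi^*-\phi) = H(\phi)(\phi-\phi^*)$. The natural-gradient identity is then immediate: Assumption~1 gives $H(\phi)\succ 0$, hence $H(\phi)$ is invertible, so $H(\phi)^{-1}\nabla L(\phi) = \phi-\phi^*$.

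I expect the main obstacle to be the rigorous justification of the mean-value identity, namely the interchange of differentiation and integration in $\nabla A(\phi) = \mathbb{E}_{q_\phi}[T(z)]$; this is where the exponential-family structure and the smoothness hypothesis do the real work, while the remaining steps are purely algebraic cancellations.
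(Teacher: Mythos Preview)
Your proposal is correct and follows essentially the same route as the paper: expand the log-ratio inside the expectation, invoke $\mathbb{E}_{q_\phi}[T(z)]=\nabla A(\phi)$ to obtain the Bregman form, and then differentiate that closed form to get $\nabla L(\phi)=H(\phi)(\phi-\phi^*)$. Your write-up is slightly more careful in flagging the differentiation-under-the-integral step and the invertibility of $H(\phi)$, but the argument is otherwise identical.
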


\begin{proof}
Substituting the exponential family forms gives
\begin{align}
L(\phi) &= \E_{q_\phi}\!\left[\log q_\phi(z)-\log p(x,z)\right] \nonumber\\
&= \E_{q_\phi}\!\left[(\langle\phi,T(z)\rangle - A(\phi))-(\langle\phi^*,T(z)\rangle - A(\phi^*))\right] \nonumber\\
&= \E_{q_\phi}\!\left[\langle\phi-\phi^*,T(z)\rangle\right] - A(\phi) + A(\phi^*) \nonumber\\
&= \langle \phi-\phi^*,\,\mu(\phi)\rangle - A(\phi) + A(\phi^*), 
\end{align}
Using $\mu(\phi)=\E_{q_\phi}[T(z)]=\nabla A(\phi)$ and rearranging, 
\begin{align}
L(\phi) &= A(\phi^*) - A(\phi) - \langle \mu(\phi), \phi^*-\phi\rangle \nonumber\\
&= D_A(\phi^*\Vert \phi). 
\end{align}

Differentiating $D_A(\phi^*\Vert\phi)$ with respect to $\phi$ yields
\begin{align}
\nabla L(\phi)&=\nabla_\phi\!\left[A(\phi^*)-A(\phi)
-\langle\mu(\phi),\phi^*-\phi\rangle\right] \nonumber\\
&=-\mu(\phi)-H(\phi)(\phi^*-\phi)+\mu(\phi) \nonumber\\
&=H(\phi)(\phi-\phi^*). \qedhere
\end{align}
\end{proof}

This representation connects VI optimization to the well-developed theory of Bregman divergences. Moreover, it also reveals why natural gradient methods are particularly effective, which we prove rigorously in Section \ref{sec:ngd}). However, it can be noticed early that the natural gradient, $H(\phi)^{-1} \nabla L(\phi) = \phi - \phi^*$, points directly towards the optimum compared to the descent of the Euclidean gradient. 

\subsection{Monotonicity around a point} \label{sec:monotone}

While Bregman divergences are not generally convex, they satisfy a weaker but still useful monotonicity property that we establish in this section. To derive this monotonicity we will leverage the following result: 
\begin{lemma}[Bregman Three-Point Identity]\label{lem:three-point}
For any strictly convex function $A$ and points $u, v, w \in \Omega$:
\begin{equation}\label{eq:three-point}
\begin{split}
D_A(u \| v) - D_A(u \| w) 
- \langle \mu(w) - \mu(v), u - w \rangle \\ = D_A(w \| v)
\end{split}
\end{equation}
where $\mu(\cdot) = \nabla A(\cdot)$.
\end{lemma}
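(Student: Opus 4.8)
The plan is to prove the identity by direct algebraic expansion of each Bregman divergence from its definition, followed by cancellation of the common terms. Recall that for the generator $A$ the divergence is $D_A(u\|v) = A(u) - A(v) - \langle \mu(v), u-v\rangle$ with $\mu = \nabla A$. The argument is purely algebraic: no inequalities, no convexity, and no regularity beyond the existence of $\mu = \nabla A$ are needed, so the whole proof reduces to careful bookkeeping of the three $A$-evaluations and the inner-product terms.

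First I would write the three divergences in the statement explicitly:
\[
D_A(u\|v) = A(u) - A(v) - \langle \mu(v),\, u-v\rangle,
\]
\[
D_A(u\|w) = A(u) - A(w) - \langle \mu(w),\, u-w\rangle,
\]
\[
D_A(w\|v) = A(w) - A(v) - \langle \mu(v),\, w-v\rangle.
\]
Subtracting the second from the first immediately cancels the common $A(u)$ term, leaving $A(w) - A(v)$ together with the inner-product remainder $-\langle\mu(v),u-v\rangle + \langle\mu(w),u-w\rangle$.

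Next I would subtract the cross term $\langle \mu(w) - \mu(v),\, u-w\rangle$ from this expression. This cancels the $\langle \mu(w),\, u-w\rangle$ contribution exactly, so only the $\mu(v)$-terms survive. Collecting them gives $-\langle\mu(v),u-v\rangle + \langle\mu(v),u-w\rangle = \langle\mu(v),\, v-w\rangle = -\langle\mu(v),\, w-v\rangle$, which is precisely the inner-product term of $D_A(w\|v)$. Combined with the surviving $A(w)-A(v)$, this reproduces $D_A(w\|v)$ and closes the proof.

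There is no genuine obstacle here: the identity holds for any differentiable $A$, and strict convexity is used only downstream to guarantee $D_A \geq 0$. The single point requiring care is the sign bookkeeping in the $\mu(v)$ inner products, where the rewriting $-(u-v)+(u-w) = v-w$ must be tracked correctly; every other term cancels by inspection.
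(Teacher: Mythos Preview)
Your proposal is correct and follows essentially the same approach as the paper's own proof: both expand the three Bregman divergences from the definition, cancel the common $A(u)$ and $\langle \mu(w), u-w\rangle$ terms, and then collect the surviving $\mu(v)$ inner products to recover $D_A(w\|v)$. Your observation that strict convexity is not actually needed for the identity (only differentiability of $A$) is accurate and slightly sharper than the paper's presentation.
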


\begin{proof}
Writing out each Bregman divergence:
\begin{align*}
D_A(u\|v) &= A(u) - A(v) - \langle \mu(v), u-v\rangle\\
D_A(u\|w) &= A(u) - A(w) - \langle \mu(w), u-w\rangle
\end{align*}

Consider the difference:
\begin{align*}
&D_A(u\|v) - D_A(u\|w) - \langle \mu(w) - \mu(v), u-w\rangle\\
&= [A(u) - A(v) - \langle \mu(v), u-v\rangle]\\
&\quad - [A(u) - A(w) - \langle \mu(w), u-w\rangle]\\
&\quad - \langle \mu(w) - \mu(v), u-w\rangle\\
&= -A(v) - \langle \mu(v), u-v\rangle + A(w)\\
&\quad + \langle \mu(w), u-w\rangle\\
&\quad - \langle \mu(w) - \mu(v), u-w\rangle
\end{align*}

Expanding the last inner product:
\[
\langle \mu(w) - \mu(v), u-w\rangle = \langle \mu(w), u-w\rangle - \langle \mu(v), u-w\rangle
\]

Substituting and noting the $\langle \mu(w), u-w\rangle$ terms cancel:
\begin{align*}
&= -A(v) - \langle \mu(v), u-v\rangle + A(w)\\
&\quad + \langle \mu(v), u-w\rangle\\
&= A(w) - A(v) + \langle \mu(v), u-w - u+v\rangle\\
&= A(w) - A(v) + \langle \mu(v), v - w\rangle\\
&= A(w) - A(v) - \langle \mu(v), w - v\rangle\\
&= D_A(w\|v)
\end{align*}

Therefore:
\[
D_A(u\|v) - D_A(u\|w) - \langle \mu(w) - \mu(v), u-w\rangle = D_A(w\|v)
\]
\end{proof}
This identity enables us to establish the key monotonicity property of the negative ELBO.

\begin{theorem}[Monotonicity around Reference point]\label{prop:monotonicity}
Under Assumptions 1 and 2, the negative ELBO satisfies:
\begin{equation}\label{eq:monotonicity}
L(\phi') \geq L(\phi) + \langle \phi - \phi^*, \mu(\phi') - \mu(\phi) \rangle
\end{equation}
for all $\phi, \phi' \in \Omega$.
\end{theorem}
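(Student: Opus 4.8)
The plan is to reduce everything to the Bregman representation established in Theorem~\ref{thm:bregman} and then apply the three-point identity of Lemma~\ref{lem:three-point} with a judicious choice of substitution. The first step is to rewrite both sides of the claimed inequality using $L(\phi) = D_A(\phi^*\|\phi)$ and $L(\phi') = D_A(\phi^*\|\phi')$, so that the target inequality becomes
\[
D_A(\phi^*\|\phi') - D_A(\phi^*\|\phi) \;\geq\; \langle \phi-\phi^*,\ \mu(\phi')-\mu(\phi)\rangle .
\]
The entire difficulty collapses to controlling the left-hand difference of divergences, which is precisely the quantity that Lemma~\ref{lem:three-point} is designed to handle.

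Next I would invoke the three-point identity with the substitution $u=\phi^*$, $v=\phi'$, and $w=\phi$. This gives
\[
D_A(\phi^*\|\phi') - D_A(\phi^*\|\phi) - \langle \mu(\phi)-\mu(\phi'),\ \phi^*-\phi\rangle \;=\; D_A(\phi\|\phi').
\]
Rearranging isolates the difference of divergences as $D_A(\phi\|\phi')$ plus the inner-product term, and a sign-flip of the inner product (using $\langle \mu(\phi)-\mu(\phi'),\,\phi^*-\phi\rangle = \langle \phi-\phi^*,\,\mu(\phi')-\mu(\phi)\rangle$) rewrites this exactly as
\[
L(\phi') - L(\phi) \;=\; D_A(\phi\|\phi') + \langle \phi-\phi^*,\ \mu(\phi')-\mu(\phi)\rangle .
\]

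The final step is to discard the nonnegative remainder term: since $A$ is strictly convex by Assumption~1 (indeed $H(\phi)\succ 0$), the Bregman divergence $D_A(\phi\|\phi')\geq 0$ for all $\phi,\phi'\in\Omega$, which immediately yields the desired inequality. I expect no genuine computational obstacle here; the only real decision point—and the one place where the proof could stall—is selecting the correct assignment of $(u,v,w)$ in the three-point identity. Mapping the reference point $\phi^*$ to the common first argument $u$ is what makes the two $L$-values appear as $D_A(\phi^*\|\cdot)$ and forces the leftover term to be the manifestly nonnegative $D_A(\phi\|\phi')$; any other assignment would leave a residual that is not sign-definite, so verifying this choice is the crux. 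It is also worth noting, as a sanity check, that the exact identity derived in the second step shows the inequality is tight precisely when $\phi=\phi'$, which is consistent with the geometric interpretation of weak monotonicity around the reference point.
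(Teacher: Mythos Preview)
Your proposal is correct and follows essentially the same route as the paper: apply Lemma~\ref{lem:three-point} with $u=\phi^*$, $v=\phi'$, $w=\phi$, rewrite the inner product via the sign-flip $\langle \mu(\phi)-\mu(\phi'),\,\phi^*-\phi\rangle = \langle \phi-\phi^*,\,\mu(\phi')-\mu(\phi)\rangle$, and drop the nonnegative $D_A(\phi\|\phi')$. Your additional remarks on the exact identity and the tightness at $\phi=\phi'$ are a nice touch but not present in the paper's proof.
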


\begin{proof}
Apply the three-point identity (Lemma \ref{lem:three-point}) with $u = \phi^*$, $v = \phi'$, and $w = \phi$:
\begin{equation}
\begin{split}
D_A(\phi^* \| \phi') - D_A(\phi^* \| \phi) 
- \langle \mu(\phi) - \mu(\phi'), \phi^* - \phi \rangle \\
= D_A(\phi \| \phi')
\end{split}
\end{equation}

Since $L(\cdot) = D_A(\phi^* \| \cdot)$ from Theorem \ref{thm:bregman} and $D_A(\phi \| \phi') \geq 0$:
\begin{equation}
\begin{split}
L(\phi') - L(\phi) \geq \langle \mu(\phi) - \mu(\phi'), \phi^* - \phi \rangle \\
= \langle \phi - \phi^*, \mu(\phi') - \mu(\phi) \rangle
\end{split}
\end{equation}
\end{proof}

This monotonicity property is weaker than convexity but provides global lower bound structure for convergence analysis. It ensures that the objective function has well-behaved directional derivatives along the gradient direction, which will be crucial for establishing convergence rates. Despite global non-convexity, it ensures that local optimization behavior is well-controlled.

These properties set the stage for our ray-wise analysis, which will exploit the local geometric structure to establish precise convergence guarantees without requiring global convexity assumptions. Before that, we can visually illustrate the monotonicity property through an example. 

\paragraph{Example (Bernoulli exponential family).}
To highlight the geometric features we discovered, we explicitly characterize them in a simple and concrete example: the Bernoulli distribution.
Its natural parameter is $\phi \in \mathbb{R}$, with log-partition function
\begin{align*}
A(\phi) &= \log(1+e^\phi),\\
\mu(\phi) &= \nabla A(\phi) = \sigma(\phi) = \frac{1}{1+e^{-\phi}}.
\end{align*}

For a fixed ``true'' parameter $\phi^{*} \in \mathbb{R}$, the negative ELBO
coincides with the Bregman divergence
\[
L(\phi) = D_A(\phi^{*} \,\|\, \phi)
= A(\phi^{*}) - A(\phi) - \mu(\phi)\,(\phi^{*} - \phi).
\]
This function is minimized uniquely at $\phi = \phi^{*}$ with $L(\phi^{\*})=0$. In Figure \ref{fig:elbo-geometry}, we can visualize the global lower bound at any point of this function as derived in Section \ref{sec:monotone}. We set $\phi^{\*}=1$ (minimizer shown in green) and choose
a point $\phi_0=-1$ (shown in red) from which the tangent and monotonicity inequalities are drawn. It can be seen that our lower bound is global. 

\begin{figure}[H]
  \centering
  \includegraphics[width=1\linewidth]{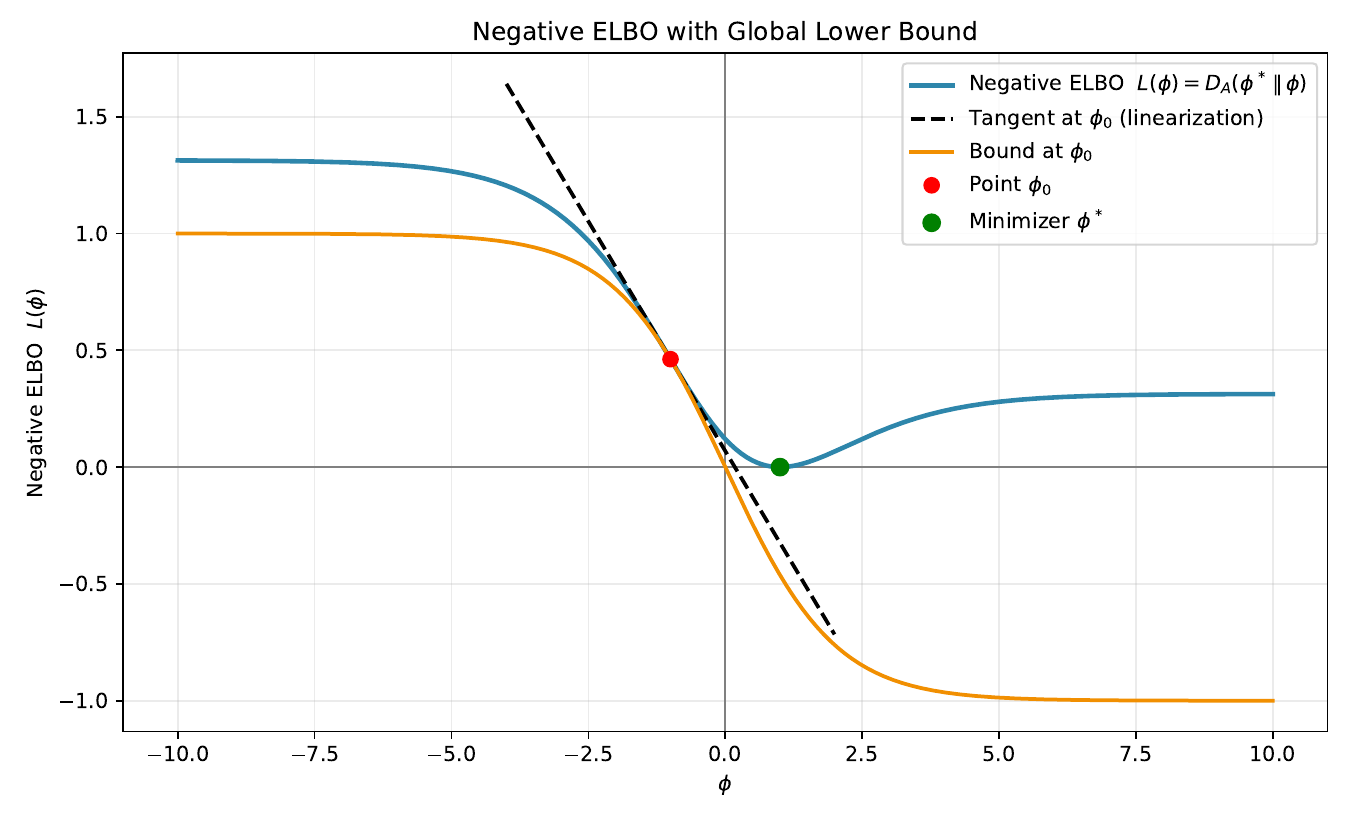}
  \caption{
  Geometry of the negative ELBO for the Bernoulli example.
  The blue curve is $L(\phi) = D_A(\phi^{\*}\!\parallel \phi)$ with the monotonicity bound from $\phi_0$,
  which provides a global inequality.
  }
  \label{fig:elbo-geometry}
\end{figure}

\section{Ray-wise Analysis and Two-sided Bounds} \label{sec: 5}

While the Bregman representation and monotonicity property lay the foundation for structural insights, we still need more information to yield concrete convergence rates. We note that these properties are global in nature, whereas optimization algorithms follow specific trajectories through parameter space. To bridge this gap, we develop a ray-wise analysis that examines the objective's behavior along line segments connecting any point to the optimum.

This approach is natural in the exponential family setting because the Fisher information matrix varies smoothly along parameter paths \citep{amari2016}, allowing us to bound the objective's curvature in terms of the spectral properties of $H(\phi)$ along specific geometric directions. Rather than requiring pessimistic global bounds \citep{xu2018, bottou2018}, this ray-wise perspective yields sharp local estimates that directly inform convergence analysis.

\subsection{Ray-wise Spectral Envelopes}

For any point $\phi \in \Omega$, we define the ray segment connecting the optimum to $\phi$ as:
\begin{equation}
\phi_s := \phi^* + s(\phi - \phi^*), \quad s \in [0,1]
\end{equation}

Along this segment, we bound the eigenvalues of the Fisher information matrix, which are defined in Definition \ref{def:spectral_bounds}.

\begin{definition}[Ray-wise Spectral Bounds]\label{def:spectral_bounds}
For $\phi \in \Omega$, define:
\begin{align}
\alpha(\phi) &:= \inf_{s \in [0,1]} \lambda_{\min}(H(\phi_s)) \\
\beta(\phi) &:= \sup_{s \in [0,1]} \lambda_{\max}(H(\phi_s))
\end{align}
\end{definition}

These quantities capture the best and worst conditioning of the Fisher information along the ray from the optimum to $\phi$. We find that these bounds are finite and continuous in nature. 

\begin{theorem}[Continuity and Finiteness]\label{thm:continuity}
Under Assumption 1, for each $\phi \in \Omega$:
\begin{enumerate}
\item The map $s \mapsto \lambda_{\max}(H(\phi_s))$ is continuous on $[0,1]$
\item $\beta(\phi) < \infty$
\item If the segment $[\phi^*, \phi]$ lies in a compact subset of $\Omega$, then $\alpha(\phi) > 0$
\end{enumerate}
\end{theorem}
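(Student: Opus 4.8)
The plan is to reduce all three claims to the continuity of the extreme-eigenvalue functionals composed with the Hessian along the ray, and then to invoke compactness of $[0,1]$. First I would observe that the path $s \mapsto \phi_s = \phi^* + s(\phi - \phi^*)$ is affine, hence continuous, and that under Assumption~1 the map $\psi \mapsto H(\psi) = \nabla^2 A(\psi)$ is continuous because $A \in C^2(\Omega)$. Composing these, $s \mapsto H(\phi_s)$ is a continuous curve in the space of symmetric matrices. This step tacitly uses that the segment $[\phi^*,\phi]$ lies in $\Omega$ so that $H(\phi_s)$ is defined; I would justify this by recalling that the natural parameter space of an exponential family is convex, so any convex combination of $\phi^*,\phi \in \Omega$ remains in $\Omega$.

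Next I would invoke the standard fact that the functionals $M \mapsto \lambda_{\max}(M)$ and $M \mapsto \lambda_{\min}(M)$ are continuous — indeed $1$-Lipschitz in the operator norm — on the space of symmetric matrices. This follows from the variational characterizations $\lambda_{\max}(M) = \max_{\|v\|=1} v^\top M v$ and $\lambda_{\min}(M) = \min_{\|v\|=1} v^\top M v$ together with Weyl's perturbation inequality. Composing with the continuous curve $s \mapsto H(\phi_s)$ immediately gives Part~1: the map $s \mapsto \lambda_{\max}(H(\phi_s))$ is continuous on $[0,1]$, and the identical argument yields continuity of $s \mapsto \lambda_{\min}(H(\phi_s))$.

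For Part~2, $\beta(\phi)$ is the supremum over the compact interval $[0,1]$ of a continuous function, which is therefore attained and finite. For Part~3, the same compactness gives that $s \mapsto \lambda_{\min}(H(\phi_s))$ attains its infimum at some $s_0 \in [0,1]$, so that $\alpha(\phi) = \lambda_{\min}(H(\phi_{s_0}))$. The hypothesis that $[\phi^*,\phi]$ lies in a compact subset $K \subset \Omega$ ensures $\phi_{s_0} \in \Omega$, where Assumption~1 gives $H(\phi_{s_0}) \succ 0$ and hence $\lambda_{\min}(H(\phi_{s_0})) > 0$; thus $\alpha(\phi) > 0$.

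The argument is essentially a continuity-plus-compactness exercise, so no single step is genuinely difficult. The one place that deserves care is explaining why Part~3 requires the compact-subset hypothesis while Part~2 does not: pointwise positive-definiteness of $H$ on $\Omega$ does not by itself prevent $\lambda_{\min}(H)$ from decaying to $0$ near $\partial\Omega$, so confining the ray to a compact $K \subset \Omega$ (equivalently, keeping it at positive distance from the boundary) is exactly what upgrades the attained infimum from nonnegative to strictly positive. I would make sure the proof flags this asymmetry, and also note explicitly that continuity — not merely measurability — of the eigenvalue maps is what licenses passing from pointwise positivity to a uniform positive lower bound.
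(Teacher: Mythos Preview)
Your proposal is correct and follows essentially the same route as the paper: affine path $+$ continuity of $H$ from $A\in C^2$ $+$ Lipschitz continuity of the eigenvalue functionals for Part~1, the extreme value theorem on $[0,1]$ for Part~2, and compactness together with pointwise positive definiteness for Part~3. The only cosmetic difference is in Part~3: the paper obtains a uniform lower bound $m>0$ for $\lambda_{\min}(H(\cdot))$ over the compact set $K$, whereas you attain the infimum of $s\mapsto\lambda_{\min}(H(\phi_s))$ at some $s_0\in[0,1]$ and then invoke $H(\phi_{s_0})\succ 0$ at that single point; these are equivalent arguments. One small expository point: your own Part~3 argument, as written, does not actually use the compact-subset hypothesis beyond ensuring the segment lies in $\Omega$ (which you already obtained from convexity), so your closing paragraph about why the hypothesis is essential sits a bit awkwardly against your proof --- the paper's formulation via a uniform bound on $K$ makes the role of that hypothesis more transparent.
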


\begin{proof}[Proof Sketch]
Continuity follows from the composition of continuous maps and Lipschitz continuity of the largest eigenvalue function. Finiteness follows from the extreme value theorem on $[0,1]$. See Appendix C.1 for details.
\end{proof}

\subsection{Integral Representation}

To lay the foundation of our raywise analysis, we use the integral representation of the objective function that expresses $L(\phi)$ as a weighted average of quadratic forms along the ray.

\begin{theorem}[Integral Representation]\label{thm:integral-rep}
Let $\delta := \phi - \phi^*$. Then:
\begin{equation}\label{eq:integral-rep}
L(\phi) = \int_0^1 s \cdot \delta^T H(\phi_s) \delta \, ds
\end{equation}
\end{theorem}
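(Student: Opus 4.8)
The plan is to reduce the claim to the fundamental theorem of calculus applied to the scalar restriction of $L$ along the ray. I would define $g(s) := L(\phi_s)$ for $s \in [0,1]$, where $\phi_s = \phi^* + s\delta$ and $\delta = \phi - \phi^*$. Since $g(1) = L(\phi)$ and, by Theorem~\ref{thm:bregman}, $g(0) = L(\phi^*) = D_A(\phi^* \| \phi^*) = 0$, it suffices to show that $g(1) - g(0)$ equals the claimed integral, i.e.\ that $g'(s) = s\,\delta^T H(\phi_s)\delta$.

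First I would compute $g'(s)$ by the chain rule, giving $g'(s) = \langle \nabla L(\phi_s), \tfrac{d}{ds}\phi_s\rangle = \langle \nabla L(\phi_s), \delta\rangle$. Next I would substitute the gradient identity from Theorem~\ref{thm:bregman}, namely $\nabla L(\phi_s) = H(\phi_s)(\phi_s - \phi^*)$. Since $\phi_s - \phi^* = s\delta$, this yields $\nabla L(\phi_s) = s\,H(\phi_s)\delta$, and hence $g'(s) = \langle s\,H(\phi_s)\delta,\, \delta\rangle = s\,\delta^T H(\phi_s)\delta$, which is exactly the integrand. Applying the fundamental theorem of calculus then gives $L(\phi) = g(1) = \int_0^1 g'(s)\,ds = \int_0^1 s\,\delta^T H(\phi_s)\delta\,ds$, as required.

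The regularity needed to justify these steps is mild: Assumption~1 gives $A \in C^2(\Omega)$, so $L = D_A(\phi^* \| \cdot)$ is $C^1$ with continuous gradient and $s \mapsto H(\phi_s)$ is continuous; thus $g \in C^1([0,1])$ and the integrand is continuous, which is all the FTC requires. The one point demanding care is that the entire segment $\{\phi_s : s \in [0,1]\}$ must lie in $\Omega$ for $g$ and $H(\phi_s)$ to be defined throughout; I would invoke this as part of the ray-wise setup (implicit already in Definition~\ref{def:spectral_bounds}) rather than assume global convexity of $\Omega$.

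I expect the main obstacle to be bookkeeping rather than conceptual: specifically, confirming that the gradient formula is differentiated along the correct direction and that the weight $s$ arises from $\phi_s - \phi^* = s\delta$ and not from the measure. As a cross-check I would re-derive the identity via Taylor's theorem with integral remainder applied to $D_A(\phi^* \| \phi)$, which produces $\int_0^1 (1-t)\,\delta^T H(\phi^* + (1-t)\delta)\,\delta\,dt$; the substitution $s = 1 - t$ recovers exactly \eqref{eq:integral-rep}, confirming both the weight $s$ and the argument $\phi_s$.
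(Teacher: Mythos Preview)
Your proof is correct and in fact more direct than the paper's. The paper works at the level of $A$: it defines $a(t)=A(\phi^*+t\delta)$, applies the fundamental theorem of calculus once to write $A(\phi)-A(\phi^*)$ as an integral of $\langle\mu(\phi_t),\delta\rangle$, a second time to write $\mu(\phi)-\mu(\phi_t)$ as an integral of $H(\phi_s)\delta$, obtains the double integral $\int_0^1\!\int_t^1 \delta^T H(\phi_s)\delta\,ds\,dt$, and then swaps the order of integration over the triangle to produce the weight $s$. You instead work at the level of $L$ and leverage the gradient identity $\nabla L(\phi)=H(\phi)(\phi-\phi^*)$ already established in Theorem~\ref{thm:bregman}; since $\phi_s-\phi^*=s\delta$, the factor $s$ appears immediately in $g'(s)$ and a single application of the FTC suffices, with no Fubini step. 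Your route is shorter and makes the origin of the weight $s$ transparent; the paper's route is slightly more self-contained (it does not invoke the gradient formula) but carries the extra order-swap. Your Taylor-remainder cross-check is essentially the paper's computation in compressed form---Taylor with integral remainder for $A$ along the segment from $\phi$ to $\phi^*$---so the two derivations agree as expected.
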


\begin{proof}[Proof Sketch]
Define $a(t) := A(\phi^* + t\delta)$ and apply the fundamental theorem of calculus twice to express $L(\phi)$ as a double integral, then change the order of integration. See Appendix C.2 for details. 
\end{proof}
This representation reveals that $L(\phi)$ is a weighted integral of the quadratic form $\delta^T H(\phi_s) \delta$ along the ray, with weight function $s$ that emphasizes points closer to $\phi$.

\subsection{Two-sided Quadratic Bounds}

The integral representation immediately yields precise two-sided bounds on the objective function.

\begin{theorem}[Ray-wise Quadratic Bounds]\label{thm:ray-bounds}
For any $\phi \in \Omega$:
\begin{equation}\label{eq:ray-bounds}
\frac{\alpha(\phi)}{2} \|\phi - \phi^*\|^2 \leq L(\phi) \leq \frac{\beta(\phi)}{2} \|\phi - \phi^*\|^2
\end{equation}
\end{theorem}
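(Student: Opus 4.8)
The plan is to start from the integral representation of Theorem~\ref{thm:integral-rep}, which already writes $L(\phi)$ as a weighted integral of the quadratic form $\delta^T H(\phi_s)\delta$ along the ray, and then control that quadratic form pointwise in $s$ via the extremal eigenvalues of $H(\phi_s)$. Writing $\delta := \phi - \phi^*$, the governing identity is
\[
L(\phi) = \int_0^1 s\,\delta^T H(\phi_s)\delta\,ds.
\]

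First I would invoke the Courant--Fischer (Rayleigh quotient) characterization: since $H(\phi_s)$ is symmetric and positive definite under Assumption~1, for every $s \in [0,1]$ we have
\[
\lambda_{\min}(H(\phi_s))\,\|\delta\|^2 \;\le\; \delta^T H(\phi_s)\delta \;\le\; \lambda_{\max}(H(\phi_s))\,\|\delta\|^2.
\]
Next I would replace the $s$-dependent eigenvalues by their uniform bounds over the segment: by Definition~\ref{def:spectral_bounds}, $\alpha(\phi) \le \lambda_{\min}(H(\phi_s))$ and $\lambda_{\max}(H(\phi_s)) \le \beta(\phi)$ for all $s \in [0,1]$, so the integrand is sandwiched as $\alpha(\phi)\|\delta\|^2 \le \delta^T H(\phi_s)\delta \le \beta(\phi)\|\delta\|^2$ uniformly in $s$.

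Finally I would integrate the sandwich against the nonnegative weight $s$ over $[0,1]$. Because the weight $s \ge 0$ preserves both inequalities and $\int_0^1 s\,ds = \tfrac12$, this yields
\[
\frac{\alpha(\phi)}{2}\|\delta\|^2 \;\le\; L(\phi) \;\le\; \frac{\beta(\phi)}{2}\|\delta\|^2,
\]
which is exactly the claim after substituting $\|\delta\| = \|\phi - \phi^*\|$.

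I do not expect a substantial obstacle here: the integral representation does the heavy lifting, and the remaining ingredients are the standard Rayleigh bound and the elementary integral $\int_0^1 s\,ds$. The only point requiring care is that the eigenvalue bounds must hold \emph{uniformly} over the whole segment rather than at a single point; this is precisely what Definition~\ref{def:spectral_bounds} secures by taking the infimum and supremum over $s \in [0,1]$, while Theorem~\ref{thm:continuity} guarantees that $\alpha(\phi)$ and $\beta(\phi)$ are finite and, with $\alpha(\phi) > 0$, nondegenerate, so the resulting two-sided bound is meaningful rather than vacuous.
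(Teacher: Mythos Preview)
Your proposal is correct and follows essentially the same route as the paper's proof: apply the Rayleigh quotient bound to the integrand of the integral representation, replace the $s$-dependent eigenvalue bounds by the uniform envelopes $\alpha(\phi)$ and $\beta(\phi)$ from Definition~\ref{def:spectral_bounds}, and integrate against the weight $s$ using $\int_0^1 s\,ds = \tfrac12$. The only minor embellishment you add is the explicit appeal to Theorem~\ref{thm:continuity} for finiteness and nondegeneracy, which the paper leaves implicit.
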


\begin{proof}[Proof Sketch]
Apply spectral bounds to the integrand in \eqref{eq:integral-rep} and integrate. The factor $1/2$ comes from $\int_0^1 s \, ds = 1/2$. See Appendix C.3 for details. 
\end{proof}
We visualize these two spectral bounds in Figure \ref{fig:ray-geometry} for the same setting as mentioned in the example in \ref{sec:monotone}.
\begin{figure}[tbh]
  \centering
  \includegraphics[width=1\linewidth]{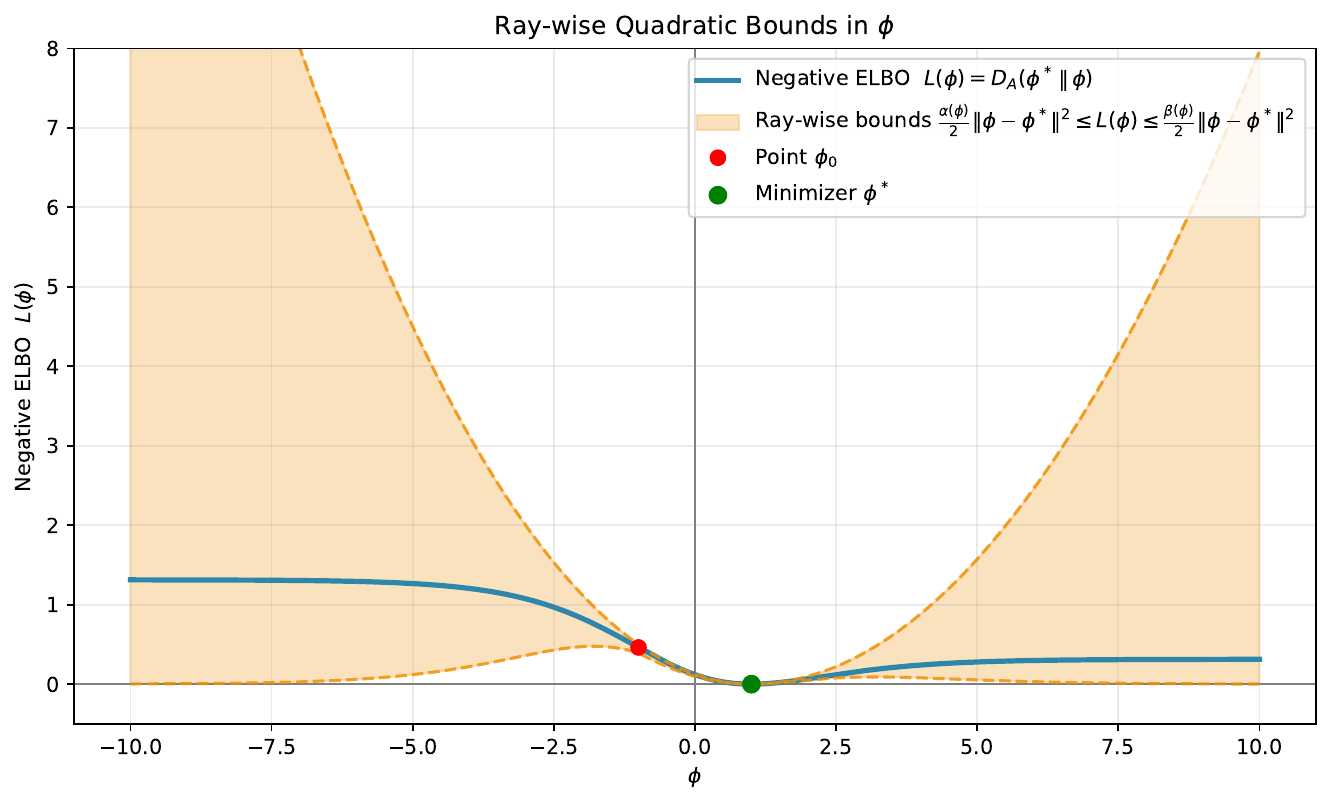}
  \caption{
  Ray-wise quadratic bounds for Bernoulli exponential family. The negative ELBO $L(\phi)$ (blue) lies within adaptive bounds $\frac{\alpha(\phi)}{2}\|\phi - \phi^*\|^2 \leq L(\phi) \leq \frac{\beta(\phi)}{2}\|\phi - \phi^*\|^2$ (orange envelope)
  }
  \label{fig:ray-geometry}
\end{figure}

The bounds show that $\alpha(\phi)$ and $\beta(\phi)$ vary across the parameter space. Near $\phi^*$ (the minimizer), the bounds are tight, but they expand significantly as we move away. These bounds help us to find following inequalities mentioned in Theorem \ref{thm:one-point}.

\begin{theorem}[One-point Inequalities]\label{thm:one-point}
Let $\delta := \phi - \phi^*$. Then:
\begin{align}
\alpha(\phi) \|\delta\|^2 &\leq \langle \nabla L(\phi), \delta \rangle \leq \beta(\phi) \|\delta\|^2 \label{eq:grad-bounds}\\
\frac{2\alpha(\phi)}{\beta(\phi)} L(\phi) &\leq \langle \nabla L(\phi), \delta \rangle \leq \frac{2\beta(\phi)}{\alpha(\phi)} L(\phi) \label{eq:PL-bounds}
\end{align}
provided $\alpha(\phi) > 0$.
\end{theorem}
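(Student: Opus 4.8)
The plan is to reduce both inequality pairs to the gradient identity $\nabla L(\phi) = H(\phi)\delta$ established in Theorem~\ref{thm:bregman}, together with the ray-wise spectral bounds of Definition~\ref{def:spectral_bounds} and the two-sided quadratic control of Theorem~\ref{thm:ray-bounds}. First I would substitute the gradient identity into the inner product to obtain the key quadratic form
\[
\langle \nabla L(\phi), \delta \rangle = \langle H(\phi)\delta, \delta \rangle = \delta^T H(\phi)\delta.
\]
This turns the problem of bounding a directional derivative into one of bounding a Rayleigh quotient, which is exactly what the spectral envelopes are designed to control.

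For the first pair of inequalities \eqref{eq:grad-bounds}, the crucial observation is that the endpoint $\phi$ itself lies on the ray: taking $s=1$ in $\phi_s = \phi^* + s(\phi-\phi^*)$ gives $\phi_1 = \phi$. Hence $H(\phi)$ is one of the matrices over which the infimum and supremum in Definition~\ref{def:spectral_bounds} are taken, so
\[
\alpha(\phi) \le \lambda_{\min}(H(\phi)), \qquad \lambda_{\max}(H(\phi)) \le \beta(\phi).
\]
Combining these with the standard Rayleigh-quotient bounds $\lambda_{\min}(H(\phi))\|\delta\|^2 \le \delta^T H(\phi)\delta \le \lambda_{\max}(H(\phi))\|\delta\|^2$ yields \eqref{eq:grad-bounds} directly.

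For the PL-type pair \eqref{eq:PL-bounds}, I would eliminate $\|\delta\|^2$ between \eqref{eq:grad-bounds} and the quadratic bounds of Theorem~\ref{thm:ray-bounds}. Rearranging the latter gives $\tfrac{2}{\beta(\phi)}L(\phi) \le \|\delta\|^2 \le \tfrac{2}{\alpha(\phi)}L(\phi)$. Substituting the lower bound on $\|\delta\|^2$ into the lower bound of \eqref{eq:grad-bounds} produces $\langle \nabla L(\phi),\delta\rangle \ge \alpha(\phi)\|\delta\|^2 \ge \tfrac{2\alpha(\phi)}{\beta(\phi)}L(\phi)$, and substituting the upper bound on $\|\delta\|^2$ into the upper bound of \eqref{eq:grad-bounds} produces $\langle \nabla L(\phi),\delta\rangle \le \beta(\phi)\|\delta\|^2 \le \tfrac{2\beta(\phi)}{\alpha(\phi)}L(\phi)$, which is exactly \eqref{eq:PL-bounds}.

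The argument is essentially a chaining of inequalities, so I do not anticipate a deep obstacle; the only point requiring care is that the hypothesis $\alpha(\phi)>0$ is used twice, once to make the Rayleigh lower bound nontrivial and once to guarantee that the divisions by $\alpha(\phi)$ in the final step are legitimate. Everything else follows from the convention $s\in[0,1]$ and the positive definiteness of $H$ guaranteed by Assumption~1.
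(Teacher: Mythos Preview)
Your proposal is correct and follows essentially the same route as the paper's proof: both use the gradient identity $\nabla L(\phi)=H(\phi)\delta$ together with the inclusion $s=1$ to bound the Rayleigh quotient by $\alpha(\phi),\beta(\phi)$, and then eliminate $\|\delta\|^2$ via the two-sided quadratic bounds of Theorem~\ref{thm:ray-bounds} to obtain the PL-type inequalities. Your remark on where $\alpha(\phi)>0$ is needed is also accurate.
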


\begin{proof}[Proof sketch]
Let $\delta:=\phi-\phi^*$. From the gradient identity $\nabla L(\phi)=H(\phi)\delta$ and the ray-wise envelopes
$\alpha(\phi):=\inf_{s\in[0,1]}\lambda_{\min}(H(\phi_s))$,
$\beta(\phi):=\sup_{s\in[0,1]}\lambda_{\max}(H(\phi_s))$,
the inclusion $s=1$ implies
$\lambda_{\min}(H(\phi))\!\ge\!\alpha(\phi)$ and
$\lambda_{\max}(H(\phi))\!\le\!\beta(\phi)$.  
The Rayleigh quotient then gives
$\alpha(\phi)\|\delta\|^2 \le \delta^\top H(\phi)\delta 
= \langle \nabla L(\phi),\delta\rangle \le \beta(\phi)\|\delta\|^2$.

For the PL-type bounds \eqref{eq:PL-bounds}, combine the integral representation
$L(\phi)=\int_0^1 s\,\delta^\top H(\phi_s)\delta\,ds$
with the spectral envelopes along the ray to obtain
$\tfrac{\alpha(\phi)}{2}\|\delta\|^2 \le L(\phi) \le \tfrac{\beta(\phi)}{2}\|\delta\|^2$.
Eliminating $\|\delta\|^2$ using \eqref{eq:grad-bounds} yields
$\tfrac{2\alpha(\phi)}{\beta(\phi)}L(\phi) \le \langle \nabla L(\phi),\delta\rangle 
\le \tfrac{2\beta(\phi)}{\alpha(\phi)}L(\phi)$, with the left inequality requiring $\alpha(\phi)>0$. See Appendix C.4 for details. 
\end{proof}

The second inequality \eqref{eq:PL-bounds} is a Polyak-Łojasiewicz type condition  \citep{polyak1963, karimi2016} with ray-dependent constants. Unlike classical PL inequalities that require global bounds, this version adapts to local geometry and can be much more favorable than worst-case global analysis.

This ray-wise analysis framework allows us to establish convergence rates that adapt to local problem structure rather than relying on global worst-case bounds. They are helpful especially because the optimization algorithms follow specific trajectories where favorable local geometry can be exploited, even when global properties are not favorable.

\section{Convergence of Natural and Euclidean Gradient Methods}
\label{sec:convergence}
We now bring together the geometric ingredients from Section \ref{sec: 5} to analyze the convergence
of two first-order optimization methods for minimizing the negative ELBO:
natural gradient descent (NGD) and standard Euclidean gradient descent (GD).
Throughout we work under Assumptions~1--2 and the ray-wise spectral envelopes
$\alpha(\phi),\beta(\phi)$ introduced earlier.
Recall the key identities
\begin{equation}
   L(\phi) = D_A(\phi^*\|\phi) 
\end{equation}
\begin{align}\label{eq:grad-identities}
\nabla L(\phi) &= H(\phi)(\phi-\phi^*), \\
H(\phi)^{-1}\nabla L(\phi) &= \phi-\phi^* .
\end{align}
We write $\delta_k := \phi_k - \phi^*$ for the error vector.

\subsection{Natural Gradient Descent (NGD)} \label{sec:ngd}

The natural gradient method leverages the Fisher information matrix to precondition gradient steps, adapting to the natural Riemannian geometry of the exponential family \citep{amari1998, amari2016}. This geometric perspective reveals why natural gradients are particularly effective for variational inference problems.

The NGD update is
\begin{equation}
\phi_{k+1} = \phi_k - \eta_k H(\phi_k)^{-1}\nabla L(\phi_k).
\end{equation}
By \eqref{eq:grad-identities}, this reduces to the simple recursion
\begin{align}\label{eq:ngd-update}
\phi_{k+1} &= \phi^* + (1-\eta_k)(\phi_k-\phi^*), \\
\delta_{k+1} &= (1-\eta_k)\delta_k .
\end{align}

The Bregman structure reveals that all iterates lie on the line segment connecting the optimum and initialization.

\begin{lemma}[Ray invariance]
\label{lem:ray-invariance}
All NGD iterates lie on the line segment connecting $\phi^*$ and the initialization $\phi_0$.
Moreover, the distance recursion is exact:
\begin{equation}
\|\delta_{k+1}\| = |1-\eta_k|\,\|\delta_k\|.
\end{equation}
\end{lemma}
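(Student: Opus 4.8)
The plan is to exploit the exact gradient identity from Theorem~\ref{thm:bregman}, namely $H(\phi)^{-1}\nabla L(\phi)=\phi-\phi^*$, which collapses the preconditioned update into a pure scalar contraction toward $\phi^*$ along a fixed direction. First I would substitute this identity directly into the NGD update $\phi_{k+1}=\phi_k-\eta_k H(\phi_k)^{-1}\nabla L(\phi_k)$, obtaining $\phi_{k+1}=\phi_k-\eta_k(\phi_k-\phi^*)$, and then subtract $\phi^*$ from both sides to recover the one-step recursion $\delta_{k+1}=(1-\eta_k)\delta_k$ already recorded in \eqref{eq:ngd-update}. The exact distance recursion is then immediate: since $\delta_{k+1}$ is a scalar multiple of $\delta_k$, taking Euclidean norms gives $\|\delta_{k+1}\|=|1-\eta_k|\,\|\delta_k\|$ with no inequality or approximation involved.

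For the ray-invariance claim I would unroll the recursion by induction on $k$. Setting $c_k:=\prod_{j=0}^{k-1}(1-\eta_j)$ with the convention $c_0:=1$, the relation $\delta_{k+1}=(1-\eta_k)\delta_k$ yields $\delta_k=c_k\,\delta_0$, so that $\phi_k=\phi^*+c_k(\phi_0-\phi^*)$. This exhibits every iterate as an affine combination of $\phi^*$ and $\phi_0$, hence lying on the line through these two points; crucially the direction $\phi_0-\phi^*$ is preserved exactly and only its scaling changes from one iteration to the next.

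The hard part, and really the only place requiring care, is the distinction between the affine \emph{line} and the closed \emph{segment} joining $\phi^*$ and $\phi_0$. Membership in the segment requires $c_k\in[0,1]$, which holds precisely when each factor $1-\eta_j\in[0,1]$, i.e. under the standard step-size restriction $\eta_j\in[0,1]$; in that regime $c_k$ is a nonincreasing sequence in $[0,1]$ and the iterates move monotonically from $\phi_0$ toward $\phi^*$ without overshooting. I would therefore state the segment conclusion under this mild assumption and remark that for $\eta_j>1$ or $\eta_j<0$ the iterates remain on the line but may exit the segment. Beyond this bookkeeping I do not anticipate a genuine obstacle: the Bregman-induced identity performs all the essential work, leaving only an elementary induction and a one-line norm computation.
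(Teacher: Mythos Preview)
Your approach is essentially identical to the paper's: substitute the identity $H(\phi)^{-1}\nabla L(\phi)=\phi-\phi^*$ into the NGD update, obtain $\delta_{k+1}=(1-\eta_k)\delta_k$, and read off both the ray property and the exact norm recursion. Your added care in distinguishing the affine line from the closed segment (noting that segment membership requires $\eta_j\in[0,1]$) is in fact more precise than the paper's own proof, which asserts ``segment'' in the statement but only argues ``line through $\phi^*$ and $\phi_0$'' in the proof.
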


This ray invariance property enables us to establish convergence rates that are independent of the conditioning of the Fisher information matrix.

\begin{theorem}[Distance contraction for NGD] \label{th:theorem7}
\label{thm:ngd-contraction}
If $0<\eta<2$ is constant, then
\begin{equation}
\|\phi_k-\phi^*\| = |1-\eta|^k \|\phi_0-\phi^*\|.
\end{equation}
In particular, the choice $\eta=1$ yields convergence to the optimum in a single step. It occurs due to setting of our assumptions. 
\end{theorem}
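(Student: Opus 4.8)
The plan is to reduce the claim to a scalar geometric recursion using the exact natural-gradient identity established in Theorem~\ref{thm:bregman}, and then simply iterate. The crucial observation is that the Bregman structure makes the natural gradient step \emph{exact} rather than merely bounded: since $H(\phi)^{-1}\nabla L(\phi) = \phi - \phi^*$, the preconditioned gradient equals the error vector itself, so no spectral information about the Fisher matrix ever enters the recursion. This is what will ultimately make the rate independent of the conditioning of $H$.

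First I would specialize the NGD update to a constant step size. Substituting the identity into $\phi_{k+1} = \phi_k - \eta\,H(\phi_k)^{-1}\nabla L(\phi_k)$ gives $\phi_{k+1} = \phi_k - \eta(\phi_k - \phi^*)$, which in error coordinates $\delta_k := \phi_k - \phi^*$ collapses to the affine recursion $\delta_{k+1} = (1-\eta)\delta_k$. This is exactly the exact distance recursion recorded in Lemma~\ref{lem:ray-invariance} specialized to $\eta_k \equiv \eta$, and it also confirms the ray-invariance picture, since each iterate is an affine combination of $\phi^*$ and $\phi_k$ lying on the segment $[\phi^*,\phi_0]$.

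Next I would unroll the recursion by a one-line induction on $k$: from $\delta_{k+1} = (1-\eta)\delta_k$ one obtains $\delta_k = (1-\eta)^k \delta_0$ for all $k \geq 0$. Taking Euclidean norms and using positive homogeneity yields $\|\delta_k\| = |1-\eta|^k \|\delta_0\|$, which is the asserted identity $\|\phi_k - \phi^*\| = |1-\eta|^k \|\phi_0 - \phi^*\|$. The hypothesis $0 < \eta < 2$ guarantees $|1-\eta| < 1$, so the factor $|1-\eta|^k$ decays geometrically and the iterates converge linearly; the single-step claim follows by setting $\eta = 1$, where $|1-\eta| = 0$ forces $\delta_1 = 0$, i.e.\ $\phi_1 = \phi^*$.

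There is no genuine analytic obstacle here: the result is an immediate corollary of the exact recursion, so the real content is conceptual rather than computational. The one point I would take care to emphasize is \emph{why} the rate is conditioning-free, namely that the cancellation $H(\phi)^{-1}\nabla L(\phi) = \phi - \phi^*$ is an exact identity valid at every iterate, which in turn rests on Assumptions~1--2 placing $\phi^*$ as the natural parameter of the target. I would also remark that the equality $\|\delta_k\| = |1-\eta|^k\|\delta_0\|$ holds for \emph{any} $\eta$, and that the restriction $0<\eta<2$ is imposed solely to ensure contraction; this sharp equality (as opposed to an inequality) is precisely the feature that separates NGD from the Euclidean case analyzed subsequently.
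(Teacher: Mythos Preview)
Your proposal is correct and follows essentially the same approach as the paper: derive the exact recursion $\delta_{k+1}=(1-\eta)\delta_k$ from the natural-gradient identity $H(\phi)^{-1}\nabla L(\phi)=\phi-\phi^*$ (equivalently Lemma~\ref{lem:ray-invariance}), iterate to $\delta_k=(1-\eta)^k\delta_0$, and take norms. Your additional commentary on why the rate is conditioning-free and why the equality is sharp is sound and goes slightly beyond the paper's terse proof, but the underlying argument is identical.
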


We show below that nature of the Bregman divergence translates the linear parameter convergence into quadratic function value decay.

\begin{corollary}[Function-value decay for $0<\eta\leq 1$]
\label{cor:ngd-func}
Let $\kappa_0 := \beta(\phi_0)/\alpha(\phi_0)$. Then for constant $0<\eta\leq 1$,
\begin{equation}
L(\phi_k) \leq \kappa_0 \, |1-\eta|^{2k}\, L(\phi_0).
\end{equation}
\end{corollary}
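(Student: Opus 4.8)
The plan is to combine the exact distance contraction from Theorem~\ref{thm:ngd-contraction} with the two-sided quadratic bounds of Theorem~\ref{thm:ray-bounds}, using the ray-invariance of NGD (Lemma~\ref{lem:ray-invariance}) to control how the spectral envelopes $\alpha,\beta$ behave along the iterates. The whole argument is essentially a short chain of inequalities once the geometry of the iterates is pinned down.

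First I would record that, since $0<\eta\le 1$, we have $0\le 1-\eta<1$, so by Lemma~\ref{lem:ray-invariance} the iterates satisfy $\phi_k=\phi^*+(1-\eta)^k(\phi_0-\phi^*)$ and lie on the segment $[\phi^*,\phi_0]$, with $\delta_k=(1-\eta)^k\delta_0$. The key geometric observation is that the ray associated to $\phi_k$, namely $s\mapsto\phi^*+s(\phi_k-\phi^*)$ for $s\in[0,1]$, is a reparametrized sub-segment of the ray associated to $\phi_0$: writing $s_k:=(1-\eta)^k\in[0,1]$, one checks $(\phi_k)_s=(\phi_0)_{s\,s_k}$, and as $s$ ranges over $[0,1]$ the product $s\,s_k$ ranges over $[0,s_k]\subseteq[0,1]$. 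Taking infima and suprema of the Fisher eigenvalues over these nested index sets yields the envelope monotonicity $\alpha(\phi_k)\ge\alpha(\phi_0)$ and $\beta(\phi_k)\le\beta(\phi_0)$.

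With these in hand, I would apply the upper bound of \eqref{eq:ray-bounds} at $\phi_k$ together with $\beta(\phi_k)\le\beta(\phi_0)$ to get $L(\phi_k)\le\tfrac{\beta(\phi_0)}{2}\|\delta_k\|^2$. Substituting the exact recursion $\|\delta_k\|^2=|1-\eta|^{2k}\|\delta_0\|^2$ from Theorem~\ref{thm:ngd-contraction}, and then invoking the lower bound of \eqref{eq:ray-bounds} at $\phi_0$ to replace $\|\delta_0\|^2\le \tfrac{2}{\alpha(\phi_0)}L(\phi_0)$, gives
\[
L(\phi_k)\le\frac{\beta(\phi_0)}{2}\,|1-\eta|^{2k}\,\frac{2}{\alpha(\phi_0)}\,L(\phi_0)=\kappa_0\,|1-\eta|^{2k}\,L(\phi_0),
\]
which is exactly the claimed decay, using $\kappa_0=\beta(\phi_0)/\alpha(\phi_0)$.

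The step I would flag as the crux is the envelope monotonicity $\beta(\phi_k)\le\beta(\phi_0)$, $\alpha(\phi_k)\ge\alpha(\phi_0)$. Without it one would be forced to carry the ratio $\kappa_k:=\beta(\phi_k)/\alpha(\phi_k)$ at every iterate, and the clean single constant $\kappa_0$ would not emerge. This monotonicity is precisely what ray-invariance buys us: because every iterate stays on the one segment $[\phi^*,\phi_0]$ and moves monotonically toward $\phi^*$, the spectral quantities are computed over shrinking sub-rays and can only improve. I would also note the mild requirement $\alpha(\phi_0)>0$ needed to divide by $\alpha(\phi_0)$, which is guaranteed by Theorem~\ref{thm:continuity} when $[\phi^*,\phi_0]$ lies in a compact subset of $\Omega$.
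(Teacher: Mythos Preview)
Your proof is correct and uses exactly the same three ingredients as the paper's: the two-sided quadratic bounds of Theorem~\ref{thm:ray-bounds}, the exact distance contraction from Theorem~\ref{thm:ngd-contraction}, and the envelope monotonicity $\beta(\phi_k)\le\beta(\phi_0)$, $\alpha(\phi_k)\ge\alpha(\phi_0)$ coming from ray-invariance (what the paper calls the ``nesting property'').

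The only structural difference is in how the pieces are assembled. The paper first derives a per-step ratio $L(\phi_{k+1})/L(\phi_k)\le \frac{\beta(\phi_{k+1})}{\alpha(\phi_k)}|1-\eta|^2$, replaces the ratio by $\kappa_0$ using nesting, and then says ``iterating this inequality yields the claimed bound.'' Taken literally, iterating $L(\phi_{k+1})\le \kappa_0|1-\eta|^2 L(\phi_k)$ would produce $\kappa_0^k|1-\eta|^{2k}$ rather than the stated $\kappa_0|1-\eta|^{2k}$. Your approach sidesteps this by applying the quadratic bounds only once, at the endpoints $\phi_k$ and $\phi_0$, and invoking the distance contraction directly for the full exponent $2k$. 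This is cleaner and makes it transparent that only a single factor of $\kappa_0$ appears. Your explicit verification of the envelope monotonicity via the reparametrization $(\phi_k)_s=(\phi_0)_{s\,s_k}$ is also more detailed than the paper's bare invocation of ``the nesting property of Section~5.''
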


For adaptive step size schedules, the convergence analysis extends naturally through the cumulative step size framework.

\begin{corollary}[Diminishing steps]
\label{cor:ngd-diminishing}
If $0<\eta_k\leq 1$ for all $k$, then
\begin{align}
\|\phi_k-\phi^*\| &\leq \exp\left(-\sum_{i=0}^{k-1}\eta_i\right)\|\phi_0-\phi^*\|, \\
L(\phi_k) &\leq \kappa_0 \exp\left(-2\sum_{i=0}^{k-1}\eta_i\right)L(\phi_0).
\end{align}
For example, with $\eta_i=c/i$ and $0<c<1$, one obtains
$\|\phi_k-\phi^*\|=O(k^{-c})$ and $L(\phi_k)=O(k^{-2c})$.
\end{corollary}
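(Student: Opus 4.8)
The plan is to reduce both statements to the exact distance recursion already provided by Lemma~\ref{lem:ray-invariance} and then invoke the ray-wise quadratic bounds of Theorem~\ref{thm:ray-bounds}. Since $0<\eta_k\leq 1$ gives $|1-\eta_k| = 1-\eta_k \in [0,1)$, telescoping the recursion $\|\delta_{k+1}\| = |1-\eta_k|\,\|\delta_k\|$ yields the exact product formula $\|\delta_k\| = \prod_{i=0}^{k-1}(1-\eta_i)\,\|\delta_0\|$. Applying the elementary inequality $1-x\leq e^{-x}$ (valid for all real $x$) factor by factor converts the product into $\prod_{i=0}^{k-1}(1-\eta_i)\leq \exp\!\big(-\sum_{i=0}^{k-1}\eta_i\big)$, which is exactly the first claimed bound.

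For the function-value decay I would first isolate the structural fact that makes a single constant $\kappa_0$ suffice: by Lemma~\ref{lem:ray-invariance} every iterate $\phi_k$ lies on the segment $[\phi^*,\phi_0]$, so $\phi_k = \phi^* + t_k(\phi_0-\phi^*)$ for some $t_k\in[0,1]$. Consequently, for $s\in[0,1]$ the ray point $\phi^* + s(\phi_k-\phi^*) = \phi^* + (st_k)(\phi_0-\phi^*)$ is itself a point on the ray from $\phi^*$ to $\phi_0$ at parameter $st_k\in[0,t_k]\subseteq[0,1]$; hence $[\phi^*,\phi_k]\subseteq[\phi^*,\phi_0]$. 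Taking infima and suprema over the smaller index set gives the nesting of spectral envelopes $\alpha(\phi_k)\geq\alpha(\phi_0)$ and $\beta(\phi_k)\leq\beta(\phi_0)$. With this in hand I apply Theorem~\ref{thm:ray-bounds} twice: the upper bound at $\phi_k$ gives $L(\phi_k)\leq \tfrac{\beta(\phi_0)}{2}\|\delta_k\|^2$, and the lower bound at $\phi_0$ gives $\|\delta_0\|^2\leq \tfrac{2}{\alpha(\phi_0)}L(\phi_0)$. Substituting the squared distance bound $\|\delta_k\|^2\leq \exp\!\big(-2\sum_{i=0}^{k-1}\eta_i\big)\|\delta_0\|^2$ and collecting the prefactor $\beta(\phi_0)/\alpha(\phi_0)=\kappa_0$ produces the second claim.

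Finally, for the $\eta_i = c/i$ schedule with $0<c<1$ (indexed from $i=1$, so that $\eta_i\leq c<1$ and the step-size constraint holds), I would evaluate the partial harmonic sum $\sum_{i=1}^{k-1}\eta_i = c\sum_{i=1}^{k-1}\tfrac{1}{i} = c\,(\ln k + O(1))$. Exponentiating gives $\exp\!\big(-\sum_i\eta_i\big) = \Theta(k^{-c})$ and $\exp\!\big(-2\sum_i\eta_i\big)=\Theta(k^{-2c})$, so the two bounds above specialize to $\|\phi_k-\phi^*\|=O(k^{-c})$ and $L(\phi_k)=O(k^{-2c})$.

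The only genuinely non-routine step is the envelope nesting $\alpha(\phi_k)\geq\alpha(\phi_0)$, $\beta(\phi_k)\leq\beta(\phi_0)$: without ray invariance the bound would carry an iterate-dependent condition number $\kappa_k=\beta(\phi_k)/\alpha(\phi_k)$ that could in principle grow, and one would have to control its supremum over the trajectory. The ray-invariance lemma is precisely what collapses this to the fixed $\kappa_0$, so flagging and justifying the sub-segment containment is the heart of the argument; everything else is a telescoping product together with the standard $1-x\leq e^{-x}$ estimate and the harmonic-sum asymptotics.
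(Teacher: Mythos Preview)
Your proposal is correct and follows essentially the same route as the paper: telescope the exact recursion from Lemma~\ref{lem:ray-invariance}, apply $1-x\le e^{-x}$, and combine with the ray-wise quadratic bounds of Theorem~\ref{thm:ray-bounds} using the envelope nesting $\alpha(\phi_k)\ge\alpha(\phi_0)$, $\beta(\phi_k)\le\beta(\phi_0)$ that ray invariance affords. If anything, your write-up is tighter than the paper's in two places: you explicitly justify the nesting via the sub-segment containment $[\phi^*,\phi_k]\subseteq[\phi^*,\phi_0]$ (the paper merely invokes it), and you apply the quadratic bounds directly at $\phi_k$ and $\phi_0$ rather than through the per-step ratio $L(\phi_{k+1})/L(\phi_k)$ used in Corollary~\ref{cor:ngd-func}, which as literally written would accumulate a $\kappa_0^{k}$ factor upon iteration; your direct endpoint argument cleanly yields the single $\kappa_0$ stated.
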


NGD leverages the Bregman geometry exactly: the natural gradient points directly
from $\phi$ to $\phi^*$, producing a straight-line contraction in parameter space.
The rate $|1-\eta|$ is independent of conditioning which explains this the method's empirical
efficiency. The decay of the function value follows quadratically as $|1-\eta|^2$ when the steps are not overshot.

\subsection{Euclidean Gradient Descent (GD)}

Standard gradient descent operates in the Euclidean parameter space, ignoring the natural Riemannian structure of the exponential family. This leads to convergence behavior that depends critically on the conditioning of the Fisher information matrix, contrasting sharply with the condition-independent performance of NGD.

The Euclidean update is
\begin{align}
\phi_{k+1} &= \phi_k - \gamma_k \nabla L(\phi_k), \\
\delta_{k+1} &= (I - \gamma_k H(\phi_k))\delta_k .
\end{align}

The convergence analysis for GD requires careful treatment of the eigenvalue spectrum of $H(\phi_k)$, as the method behaves like descent on a locally quadratic approximation whose conditioning determines the convergence rate.

\begin{theorem}[Per-iteration contraction for GD]
\label{thm:gd-contraction}
Fix $k$ and choose $\gamma_k \in (0,2/\beta(\phi_k))$.
Then
\begin{align}
\|\delta_{k+1}\| &\leq \rho_k \|\delta_k\|, \\
\rho_k &:= \max\{|1-\gamma_k \alpha(\phi_k)|,\;|1-\gamma_k \beta(\phi_k)|\}<1.
\end{align}
The optimal step is $\gamma_k^* = 2/(\alpha(\phi_k)+\beta(\phi_k))$,
yielding
\begin{equation}
\rho_k^* = \frac{\beta(\phi_k)-\alpha(\phi_k)}{\beta(\phi_k)+\alpha(\phi_k)}.
\end{equation}
\end{theorem}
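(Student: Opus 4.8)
The plan is to reduce the distance recursion to an operator-norm estimate on the symmetric matrix $I - \gamma_k H(\phi_k)$, control its spectrum through the ray-wise envelopes, and then solve a one-dimensional min-max to identify the optimal step. First I would recall that the GD error recursion $\delta_{k+1} = (I - \gamma_k H(\phi_k))\delta_k$ follows directly from the gradient identity $\nabla L(\phi) = H(\phi)(\phi - \phi^*)$ of Theorem~\ref{thm:bregman}. Since $H(\phi_k)$ is symmetric positive definite by Assumption~1, the matrix $I - \gamma_k H(\phi_k)$ is symmetric, so its operator norm equals its spectral radius $\max_i |1 - \gamma_k \lambda_i(H(\phi_k))|$. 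Taking norms then yields $\|\delta_{k+1}\| \le \big(\max_i |1 - \gamma_k \lambda_i(H(\phi_k))|\big)\,\|\delta_k\|$.

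Next I would confine the eigenvalues. By the $s=1$ inclusion in the ray-wise envelopes (exactly as invoked in the proof of Theorem~\ref{thm:one-point}), every eigenvalue of $H(\phi_k)$ lies in $[\alpha(\phi_k), \beta(\phi_k)]$. The scalar map $\lambda \mapsto |1 - \gamma_k \lambda|$ is convex and piecewise linear, so its maximum over this interval is attained at an endpoint; hence $\max_i |1 - \gamma_k \lambda_i| \le \max\{|1-\gamma_k\alpha(\phi_k)|,\,|1-\gamma_k\beta(\phi_k)|\} = \rho_k$, which establishes the contraction $\|\delta_{k+1}\| \le \rho_k\|\delta_k\|$. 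To verify $\rho_k < 1$, note that $|1 - \gamma_k \lambda| < 1$ holds precisely when $0 < \gamma_k < 2/\lambda$. For $\lambda = \beta(\phi_k)$ this is the standing hypothesis $\gamma_k \in (0, 2/\beta(\phi_k))$, and since $\alpha(\phi_k) \le \beta(\phi_k)$ gives $2/\beta(\phi_k) \le 2/\alpha(\phi_k)$, the same range forces $|1 - \gamma_k \alpha(\phi_k)| < 1$ as well; positivity $\alpha(\phi_k) > 0$ is supplied by Theorem~\ref{thm:continuity} using compactness of $\Omega$.

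Finally I would optimize over the step size. Writing $\alpha, \beta$ for $\alpha(\phi_k), \beta(\phi_k)$, the function $\gamma \mapsto \rho_k(\gamma) = \max\{|1-\gamma\alpha|,\,|1-\gamma\beta|\}$ is convex: on $(0, 2/(\alpha+\beta))$ the active branch is $1 - \gamma\alpha$ (decreasing), while on $(2/(\alpha+\beta), 2/\beta)$ it becomes $\gamma\beta - 1$ (increasing). Balancing the two branches via $1 - \gamma\alpha = \gamma\beta - 1$ gives the crossing point $\gamma_k^* = 2/(\alpha+\beta)$, at which both branches equal $(\beta-\alpha)/(\alpha+\beta)$, yielding the stated optimal rate $\rho_k^* = (\beta(\phi_k)-\alpha(\phi_k))/(\beta(\phi_k)+\alpha(\phi_k))$.

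The technical steps here are all elementary, and the only point demanding genuine care is the min-max optimization in the last paragraph: one must correctly identify which branch of $\max\{|1-\gamma\alpha|,\,|1-\gamma\beta|\}$ is active on each subinterval and confirm that the balance point is the global minimizer rather than merely a stationary crossing. Everything else reduces to the symmetric-matrix operator-norm identity and the endpoint-maximum property of the convex envelope $|1-\gamma\lambda|$.
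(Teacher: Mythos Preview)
Your proposal is correct and follows essentially the same route as the paper: both reduce the error recursion to the spectral radius of $I-\gamma_k H(\phi_k)$, bound the eigenvalues by the ray-wise envelopes $[\alpha(\phi_k),\beta(\phi_k)]$, and then solve the one-dimensional min--max to obtain $\gamma_k^*=2/(\alpha+\beta)$ and $\rho_k^*=(\beta-\alpha)/(\beta+\alpha)$. Your version is in fact more carefully argued than the paper's (explicitly justifying the endpoint maximum via convexity of $\lambda\mapsto|1-\gamma_k\lambda|$ and the strict inequality $\rho_k<1$), but the underlying argument is the same.
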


The function value analysis follows from the parameter convergence, though the relationship is more complex than in the NGD case due to the changing geometry along the optimization path.

\begin{corollary}[One-step function-value decrease]
\label{cor:gd-func}
For any $\gamma_k \in (0,2/\beta(\phi_k))$,
\begin{equation}
L(\phi_{k+1}) \leq
\frac{\beta(\phi_{k+1})}{\alpha(\phi_k)}\,
\rho_k^2\,L(\phi_k).
\end{equation}
\end{corollary}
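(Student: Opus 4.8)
The plan is to chain the two-sided quadratic bounds of Theorem~\ref{thm:ray-bounds} with the per-iteration distance contraction of Theorem~\ref{thm:gd-contraction}, converting back and forth between function values and squared distances at the two consecutive iterates. Writing $\delta_k := \phi_k - \phi^*$ and $\delta_{k+1} := \phi_{k+1} - \phi^*$ as in the surrounding text, the key structural point is that the upper quadratic bound gets applied at $\phi_{k+1}$ while the lower one gets applied at $\phi_k$, which is exactly why the two \emph{distinct} ray-wise envelopes $\beta(\phi_{k+1})$ and $\alpha(\phi_k)$ both show up in the final constant.

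First I would apply the upper bound from \eqref{eq:ray-bounds} at the point $\phi_{k+1}$, giving
\[
L(\phi_{k+1}) \le \frac{\beta(\phi_{k+1})}{2}\,\|\delta_{k+1}\|^2.
\]
Next I would insert the contraction from Theorem~\ref{thm:gd-contraction}, namely $\|\delta_{k+1}\| \le \rho_k\|\delta_k\|$ with $\rho_k<1$, which holds precisely because $\gamma_k \in (0,2/\beta(\phi_k))$. Squaring and substituting yields
\[
L(\phi_{k+1}) \le \frac{\beta(\phi_{k+1})}{2}\,\rho_k^2\,\|\delta_k\|^2.
\]
To close the loop I would re-express $\|\delta_k\|^2$ in terms of $L(\phi_k)$ using the lower bound from \eqref{eq:ray-bounds} at $\phi_k$, i.e. $\tfrac{\alpha(\phi_k)}{2}\|\delta_k\|^2 \le L(\phi_k)$, so that $\|\delta_k\|^2 \le \tfrac{2}{\alpha(\phi_k)}L(\phi_k)$. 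Combining the three displays and cancelling the factors of $2$ delivers
\[
L(\phi_{k+1}) \le \frac{\beta(\phi_{k+1})}{\alpha(\phi_k)}\,\rho_k^2\,L(\phi_k),
\]
which is the claim.

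The only genuine obstacle is legitimacy of the lower-bound step: dividing by $\alpha(\phi_k)$ requires $\alpha(\phi_k)>0$, and invoking $\beta(\phi_{k+1})$ requires it to be finite. Both are guaranteed by Theorem~\ref{thm:continuity}, since the relevant segments $[\phi^*,\phi_k]$ and $[\phi^*,\phi_{k+1}]$ lie in a compact subset of $\Omega$ under the compactness assumed in Assumption~1. A subtlety worth flagging, though it does not affect the stated inequality, is that the prefactor $\beta(\phi_{k+1})/\alpha(\phi_k)$ mixes spectral envelopes measured along two different rays (one to $\phi_{k+1}$, one to $\phi_k$); consequently, unlike the NGD case where the iterates remain on a single ray, this per-step bound does not telescope into a clean geometric product without additional control on how $\alpha(\cdot)$ and $\beta(\cdot)$ evolve along the trajectory.
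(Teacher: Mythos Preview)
Your proof is correct and matches the paper's argument essentially line for line: apply the upper quadratic bound at $\phi_{k+1}$, insert $\|\delta_{k+1}\|\le\rho_k\|\delta_k\|$ from Theorem~\ref{thm:gd-contraction}, then use the lower quadratic bound at $\phi_k$ to convert $\|\delta_k\|^2$ back into $L(\phi_k)$. Your added remarks on the positivity/finiteness of $\alpha(\phi_k)$ and $\beta(\phi_{k+1})$ and on the non-telescoping of the mixed-ray prefactor go slightly beyond what the paper spells out but are accurate and helpful.
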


Unlike NGD, GD behaves like descent on a locally quadratic model whose eigenvalues
lie between $\alpha(\phi_k)$ and $\beta(\phi_k)$. Its best contraction factor is  $(\beta-\alpha)/(\beta+\alpha)$, which deteriorates as conditioning
worsens. This contrast explains the superior empirical performance of NGD:
it contracts independently of $\alpha,\beta$, whereas GD is bottlenecked by 
condition number $\kappa(\phi)=\beta(\phi)/\alpha(\phi)$.

\section{Numerical Experiments}
We present two sets of experiments: first, a trajectory comparison of NGD and GD on a Bernoulli exponential family, and second, a convergence study on a high-dimensional Gaussian VI problem.

\subsection{Trajectory comparison} \label{sec:track}
We compare the trajectory taken by natural gradient descent and Euclidean gradient descent on a 2-dimensional Bernoulli exponential family.

\begin{figure}[tbh]
  \centering  \includegraphics[width=1\linewidth]{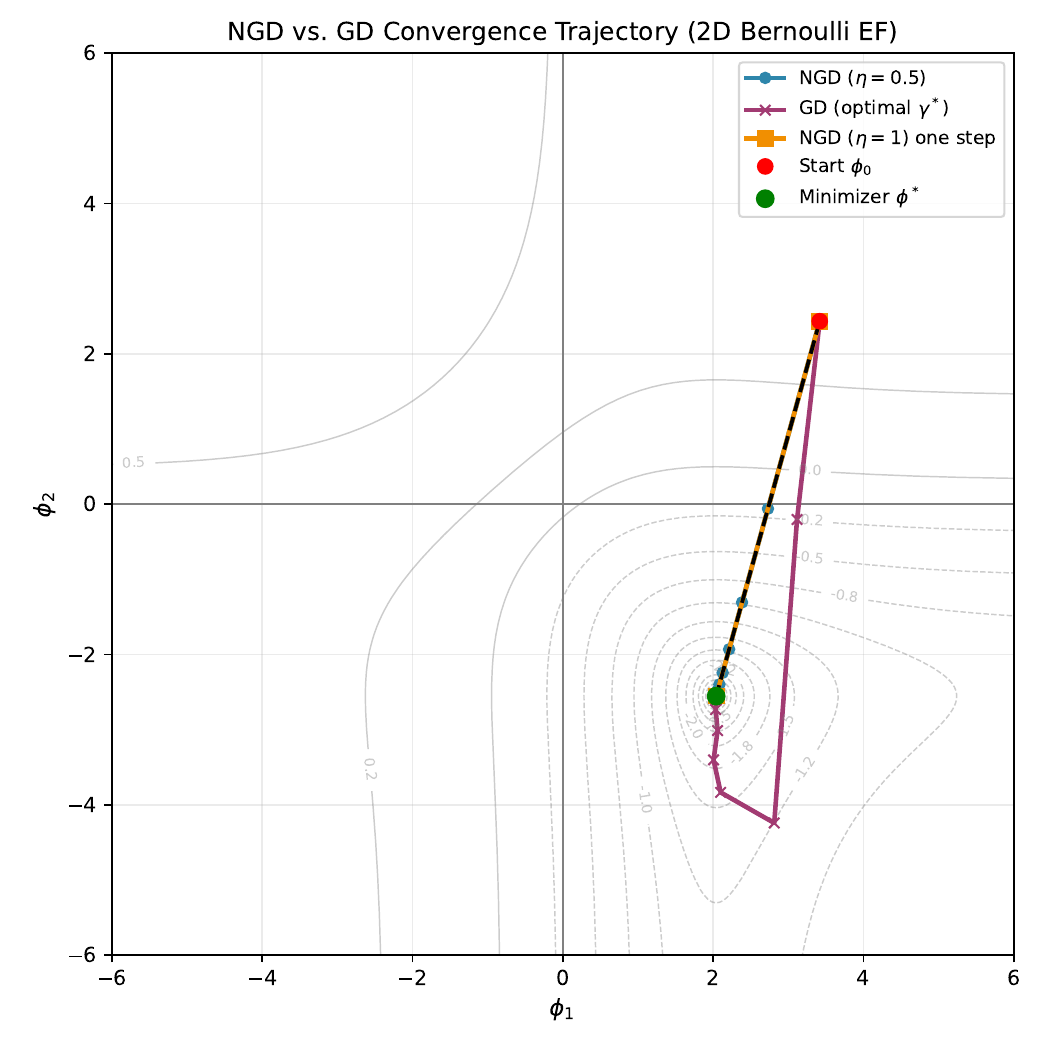}
  \caption{Trajectory comparison of Natural Gradient Descent (NGD) Vs. Gradient Descent
  }
  \label{fig:ray-invariance}
\end{figure}

We show three distinct optimization trajectories overlaid on the contour plot of the negative ELBO landscape. Starting from the red point $\phi_0$, we observe that the blue trajectory (NGD with $\eta=0.5$) descends directly along the black dashed line toward the green minimizer $\phi^*$, with each step moving exactly halfway along the remaining distance. The orange trajectory (NGD with $\eta=1$) demonstrates a single jump directly from $\phi_0$ to $\phi^*$, traversing the entire distance in one step. On the other hand, the purple trajectory (GD with optimal $\gamma^*$) begins in the same direction but gradually curves away from the direct path, sweeping through the lower portion of the parameter space before eventually approaching $\phi^*$ from below. The gray contour lines reveal that while NGD trajectories cut straight across these level sets along the steepest descent direction in the natural metric, the GD trajectory follows the geometry imposed by the Euclidean metric, which does not align with the underlying Bregman structure of the objective function.

\subsection{GD vs.\ NGD on Gaussian VI (d=20)} \label{sec:vae}
We take both the true posterior and  approximator to be Gaussian with fixed covariance: \\
\begin{align*}
p(z \mid x) &= \mathcal{N}(z \mid \phi^*, I), \\
q_\phi(z) &= \mathcal{N}(z \mid \phi, I).
\end{align*}

so that the Fisher information is diagonal. 
In this setting, the negative ELBO reduces to the quadratic form which we minimize using GD and NGD. See Appendix B.2 for experimental setup.  \\

We compare Euclidean Gradient Descent (GD) and Natural Gradient Descent (NGD) on a $d=20$ Gaussian variational inference problem, where the spectrum of the Fisher information matrix is controlled through different ratios $\alpha/\beta$. 

\begin{figure}[tbh]
  \centering  \includegraphics[width=1\linewidth]{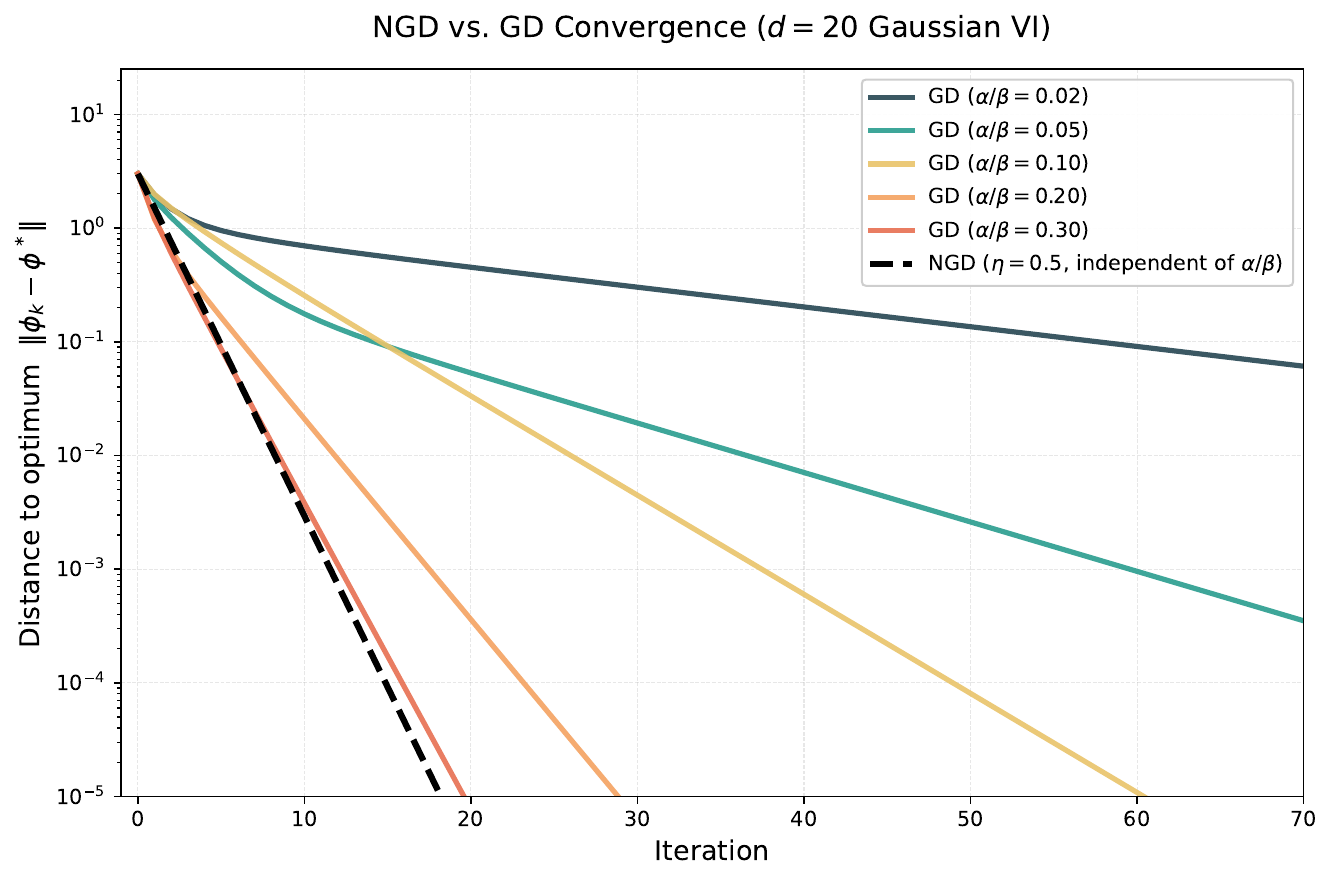}
  \caption{Trajectory comparison of Natural Gradient Descent (NGD) Vs. Gradient Descent
  }
  \label{fig:vae}
\end{figure}

From Definition~\ref{def:spectral_bounds}, $\alpha(\phi)$ and $\beta(\phi)$ bound the smallest and largest eigenvalues of the Fisher information matrix along the ray to the optimum, and their ratio $\alpha/\beta$ quantifies how well-conditioned the problem is. Figure~\ref{fig:vae} shows that GD trajectories slow down significantly as $\alpha/\beta$ decreases, reflecting the dependence of the contraction factor on the eigenvalue spread given in Theorem~\ref{thm:gd-contraction}. In contrast, NGD converges linearly at a uniform rate independent of $\alpha/\beta$, moving directly along the ray to the optimum as guaranteed by Lemma~\ref{lem:ray-invariance} and Theorem~\ref{thm:ngd-contraction}. These results empirically confirm our ray-wise analysis from Section~\ref{sec:convergence}: GD is bottlenecked by poor conditioning, while NGD exploits the Bregman geometry to achieve robust convergence across all regimes.

\section{Conclusion}
We developed a geometric framework and analyzed the convergence of variational inference by uncovering properties--a ray-wise monotonicity and quadratic bounding behavior which were validated through numerical experiments. One limitation of our work is that the analysis hinges on the posterior being a member of the exponential family; however, we believe that the result can be extended with additional work.   

\section{Acknowledgment}
This work was carried out at the Information Theory, Probability, and Statistics Unit, Okinawa Institute of Science and Technology Graduate University (OIST), as part of a research internship.

\bibliography{references}

\begin{thebibliography}{}

\bibitem[Amari, 1998]{amari1998}
Amari, S.-i. (1998).
\newblock Natural gradient works efficiently in learning.
\newblock {\em Neural Computation}, 10(2):251--276.

\bibitem[Amari, 2016]{amari2016}
Amari, S.-i. (2016).
\newblock {\em Information geometry and its applications}.
\newblock Springer.

\bibitem[Beck and Teboulle, 2003]{beck2003}
Beck, A. and Teboulle, M. (2003).
\newblock Mirror descent and nonlinear projected subgradient methods for convex optimization.
\newblock {\em Operations Research Letters}, 31(3):167--175.

\bibitem[Blei et~al., 2017]{blei2017variational}
Blei, D.~M., Kucukelbir, A., and McAuliffe, J.~D. (2017).
\newblock Variational inference: A review for statisticians.
\newblock {\em Journal of the American Statistical Association}, 112(518):859--877.

\bibitem[Blei et~al., 2003]{blei2003}
Blei, D.~M., Ng, A.~Y., and Jordan, M.~I. (2003).
\newblock Latent dirichlet allocation.
\newblock {\em Journal of Machine Learning Research}, 3:993--1022.

\bibitem[Bottou et~al., 2018]{bottou2018}
Bottou, L., Curtis, F.~E., and Nocedal, J. (2018).
\newblock Optimization methods for large-scale machine learning.
\newblock {\em SIAM Review}, 60(2):223--311.

\bibitem[Bregman, 1967]{bregman1967}
Bregman, L.~M. (1967).
\newblock The relaxation method of finding the common point of convex sets and its application to the solution of problems in convex programming.
\newblock {\em USSR Computational Mathematics and Mathematical Physics}, 7(3):200--217.

\bibitem[Cover and Thomas, 2006]{cover2006}
Cover, T.~M. and Thomas, J.~A. (2006).
\newblock {\em Elements of Information Theory}.
\newblock John Wiley \& Sons, 2nd edition.

\bibitem[Hoffman et~al., 2013]{hoffman2013stochastic}
Hoffman, M.~D., Blei, D.~M., Wang, C., and Paisley, J. (2013).
\newblock Stochastic variational inference.
\newblock {\em Journal of Machine Learning Research}, 14(1):1303--1347.

\bibitem[Jordan et~al., 1999]{jordan1999introduction}
Jordan, M.~I., Ghahramani, Z., Jaakkola, T.~S., and Saul, L.~K. (1999).
\newblock An introduction to variational methods for graphical models.
\newblock {\em Machine Learning}, 37(2):183--233.

\bibitem[Karimi et~al., 2016]{karimi2016}
Karimi, H., Nutini, J., and Schmidt, M. (2016).
\newblock Linear convergence of gradient and proximal-gradient methods under the polyak-{\l}ojasiewicz condition.
\newblock {\em Joint European Conference on Machine Learning and Knowledge Discovery in Databases}, pages 795--811.

\bibitem[Khan and Nielsen, 2018]{khan2018}
Khan, M.~E. and Nielsen, D. (2018).
\newblock Fast yet simple natural-gradient descent for variational inference in complex models.
\newblock {\em arXiv preprint arXiv:1807.04489}.

\bibitem[Kingma and Welling, 2014]{kingma2014vae}
Kingma, D.~P. and Welling, M. (2014).
\newblock Auto-encoding variational bayes.
\newblock In {\em International Conference on Learning Representations}.

\bibitem[Kullback and Leibler, 1951]{kullback1951}
Kullback, S. and Leibler, R.~A. (1951).
\newblock On information and sufficiency.
\newblock {\em The Annals of Mathematical Statistics}, 22(1):79--86.

\bibitem[Lin et~al., 2019]{lin2019}
Lin, W., Khan, M.~E., and Schmidt, M. (2019).
\newblock Fast and simple natural-gradient variational inference with mixture of exponential-family approximations.
\newblock {\em International Conference on Machine Learning}, pages 3992--4002.

\bibitem[Necoara et~al., 2019]{necoara2019}
Necoara, I., Nesterov, Y., and Glineur, F. (2019).
\newblock Linear convergence of first order methods for non-strongly convex optimization.
\newblock {\em Mathematical Programming}, 175:69--107.

\bibitem[Nemirovsky and Yudin, 1983]{nemirovsky1983}
Nemirovsky, A.~S. and Yudin, D.~B. (1983).
\newblock {\em Problem Complexity and Method Efficiency in Optimization}.
\newblock Wiley, New York.

\bibitem[Nesterov, 2018]{nesterov2018}
Nesterov, Y. (2018).
\newblock {\em Lectures on Convex Optimization}.
\newblock Springer, 2nd edition.

\bibitem[Polyak, 1963]{polyak1963}
Polyak, B.~T. (1963).
\newblock Gradient methods for minimizing functionals.
\newblock {\em Zhurnal Vychislitel'noi Matematiki i Matematicheskoi Fiziki}, 3(4):643--653.

\bibitem[Salimans and Knowles, 2013]{salimans2013}
Salimans, T. and Knowles, D.~A. (2013).
\newblock Fixed-form variational posterior approximation through stochastic linear regression.
\newblock {\em Bayesian Analysis}, 8(4):837--882.

\bibitem[Wang and Titterington, 2005]{wang2005}
Wang, B. and Titterington, D.~M. (2005).
\newblock Inadequacy of interval estimates corresponding to variational bayesian approximations.
\newblock {\em AISTATS}.

\bibitem[Xu et~al., 2018]{xu2018}
Xu, P., Chen, J., Zou, D., and Gu, Q. (2018).
\newblock Global convergence of langevin dynamics based algorithms for nonconvex optimization.
\newblock In {\em Advances in Neural Information Processing Systems}, pages 3122--3133.

\end{thebibliography}

\clearpage
\appendix
\thispagestyle{empty}

\onecolumn
\appendix

\section{KL Divergence and the ELBO}
\label{app:kl-elbo}

The Kullback--Leibler divergence is
\[
D_{\mathrm{KL}}(q_\phi(z)\,\|\,p(z \mid x)) 
= \mathbb{E}_{q_\phi}\!\left[ \log \frac{q_\phi(z)}{p(z \mid x)} \right].
\]
Using $p(z \mid x) = \tfrac{p(x,z)}{p(x)}$ yields
\begin{align*}
D_{\mathrm{KL}}(q_\phi(z)\,\|\,p(z \mid x))
&= \mathbb{E}_{q_\phi}\!\left[ \log q_\phi(z) - \log p(x,z) + \log p(x) \right] \\
&= \underbrace{\mathbb{E}_{q_\phi}\!\left[ \log q_\phi(z) - \log p(x,z) \right]}_{L(\phi)} \;+\; \log p(x).
\end{align*}
Here, $L(\phi)$ is the negative ELBO given in Equation~\eqref{eq:neg_elbo}. Since $\log p(x)$ is independent of $\phi$, minimizing the KL divergence is equivalent to minimizing $L(\phi)$, or equivalently maximizing the ELBO:
\[
\arg\min_\phi D_{\mathrm{KL}}(q_\phi \,\|\, p(z\mid x)) 
\;=\; \arg\min_\phi L(\phi) 
\;=\; \arg\max_\phi \mathrm{ELBO}(\phi).
\]

\section{Experimental Details}
\label{app:exp-details}

\subsection{Details for Section~\ref{sec:track}}
In the trajectory comparison experiment with the Bernoulli exponential family (Figure~\ref{fig:ray-invariance}), we visualize optimization paths of GD and NGD on the negative ELBO landscape. The setup is as follows:
\begin{enumerate}
    \item We initialize the parameter $\phi_0$ at a point away from the minimizer $\phi^*$ and generate contour plots of the negative ELBO.
    \item NGD is run with step sizes $\eta=0.5$ and $\eta=1.0$, while GD uses the step size $\gamma^* = 2/(\alpha(\phi)+\beta(\phi))$ as given in Theorem~\ref{thm:gd-contraction}.
\end{enumerate}

\subsection{Details for Section~\ref{sec:vae}}
For the Gaussian variational inference experiment in $d=20$ dimensions (Figure~\ref{fig:vae}), we vary the conditioning of the Fisher information matrix by fixing $\beta=1$ and selecting different values of $\alpha$, so that $\alpha/\beta$ controls the conditioning. The procedure is:
\begin{enumerate}
    \item We consider ratios $\alpha/\beta \in \{0.02,0.05,0.10,0.20,0.30\}$.
    \item GD is run with $\gamma^* = 2/(\alpha+\beta)$, and NGD with constant step size $\eta=0.5$.
\end{enumerate}

\section{Detailed Proofs for Section 5}

\subsection{Proof of Theorem \ref{thm:continuity}}

\begin{proof}
(1) The map $s \mapsto \phi_s$ is affine, hence continuous. Since $H$ is continuous on $\Omega$ by Assumption 1 and the largest eigenvalue function $M \mapsto \lambda_{\max}(M)$ is Lipschitz continuous with respect to the operator norm on symmetric matrices, their composition is continuous.

Specifically, for symmetric matrices $M, N$, we have:
$$|\lambda_{\max}(M) - \lambda_{\max}(N)| \leq \|M - N\|_2$$

Therefore, $s \mapsto \lambda_{\max}(H(\phi_s))$ is continuous on $[0,1]$.

(2) By the extreme value theorem, any continuous function on a compact set attains its maximum. Since $[0,1]$ is compact and $s \mapsto \lambda_{\max}(H(\phi_s))$ is continuous, $\beta(\phi) < \infty$.

(3) If $[\phi^*, \phi] \subset K$ for some compact $K \subset \Omega$, then by continuity of $H$ and positive definiteness, there exists $m > 0$ such that $\lambda_{\min}(H(\psi)) \geq m$ for all $\psi \in K$. Thus $\alpha(\phi) \geq m > 0$.
\end{proof}

\subsection{Proof of Theorem \ref{thm:integral-rep}}

\begin{proof}
Define $a(t) := A(\phi^* + t\delta)$ for $t \in [0,1]$. Then:
\begin{align}
a'(t) &= \langle \nabla A(\phi_t), \delta \rangle = \langle \mu(\phi_t), \delta \rangle \\
a''(t) &= \delta^T H(\phi_t) \delta
\end{align}

From the Bregman representation,
$$L(\phi) = A(\phi^*) - A(\phi) + \langle \mu(\phi), \delta \rangle$$

By the fundamental theorem of calculus:
$$A(\phi) - A(\phi^*) = a(1) - a(0) = \int_0^1 a'(t) dt = \int_0^1 \langle \mu(\phi_t), \delta \rangle dt$$

Similarly:
$$\mu(\phi) - \mu(\phi_t) = \int_t^1 H(\phi_s) \delta \, ds$$

Substituting:
\begin{align}
L(\phi) &= -\int_0^1 \langle \mu(\phi_t), \delta \rangle dt + \langle \mu(\phi), \delta \rangle \\
&= \int_0^1 \langle \mu(\phi) - \mu(\phi_t), \delta \rangle dt \\
&= \int_0^1 \left\langle \int_t^1 H(\phi_s) \delta \, ds, \delta \right\rangle dt \\
&= \int_0^1 \int_t^1 \delta^T H(\phi_s) \delta \, ds \, dt
\end{align}

Changing the order of integration over the triangular region $\{(t,s) : 0 \leq t \leq s \leq 1\}$:
$$L(\phi) = \int_0^1 \int_0^s \delta^T H(\phi_s) \delta \, dt \, ds = \int_0^1 s \cdot \delta^T H(\phi_s) \delta \, ds$$
\end{proof}

\subsection{Proof of Theorem \ref{thm:ray-bounds}}

\begin{proof}
From the integral representation and spectral bounds:
$$\lambda_{\min}(H(\phi_s)) \|\delta\|^2 \leq \delta^T H(\phi_s) \delta \leq \lambda_{\max}(H(\phi_s)) \|\delta\|^2$$

By definition of $\alpha(\phi)$ and $\beta(\phi)$:
$$\alpha(\phi) \|\delta\|^2 \leq \delta^T H(\phi_s) \delta \leq \beta(\phi) \|\delta\|^2$$

for all $s \in [0,1]$. Integrating with weight $s$:
$$\alpha(\phi) \|\delta\|^2 \int_0^1 s \, ds \leq L(\phi) \leq \beta(\phi) \|\delta\|^2 \int_0^1 s \, ds$$

Since $\int_0^1 s \, ds = 1/2$, we obtain the desired bounds.
\end{proof}

\subsection{Proof of Theorem~\ref{thm:one-point}}

\begin{proof}
We have $\delta:=\phi-\phi^*$, $\phi_s:=\phi^*+s(\phi-\phi^*)$, and
\[
\alpha(\phi):=\inf_{s\in[0,1]}\lambda_{\min}(H(\phi_s)),
\qquad
\beta(\phi):=\sup_{s\in[0,1]}\lambda_{\max}(H(\phi_s)).
\]
We use the identities (proved earlier):
\[
\nabla L(\phi)=H(\phi)\delta,
\qquad
L(\phi)=\int_0^1 s\,\delta^\top H(\phi_s)\delta\,ds.
\]
(i)
Because $s=1$ lies in $[0,1]$, the definitions of $\alpha(\phi)$ and $\beta(\phi)$ imply
\[
\lambda_{\min}(H(\phi)) \;\ge\; \alpha(\phi),
\qquad
\lambda_{\max}(H(\phi)) \;\le\; \beta(\phi).
\]
Hence, by the Rayleigh quotient for symmetric positive definite matrices,
\[
\alpha(\phi)\,\|\delta\|^2
\;\le\; \delta^\top H(\phi)\,\delta
\;\le\; \beta(\phi)\,\|\delta\|^2.
\]
Using $\langle \nabla L(\phi),\delta\rangle=\delta^\top H(\phi)\delta$ gives
\[
\alpha(\phi)\,\|\delta\|^2 \;\le\; \langle \nabla L(\phi),\delta\rangle
\;\le\; \beta(\phi)\,\|\delta\|^2,
\]
establishing \eqref{eq:grad-bounds}.

(ii)
From the integral representation and the spectral envelopes along the ray,
\[
\alpha(\phi)\,\|\delta\|^2
\;\le\; \delta^\top H(\phi_s)\delta
\;\le\; \beta(\phi)\,\|\delta\|^2
\quad\text{for all } s\in[0,1].
\]
Integrating against $s$ yields the ray-wise quadratic bounds (Theorem~\ref{thm:ray-bounds}):
\[
\frac{\alpha(\phi)}{2}\,\|\delta\|^2 \;\le\; L(\phi) \;\le\; \frac{\beta(\phi)}{2}\,\|\delta\|^2.
\]
Combine these with \eqref{eq:grad-bounds}.

\emph{Lower PL bound:} 
\[
\langle \nabla L(\phi),\delta\rangle
\;\ge\; \alpha(\phi)\,\|\delta\|^2
\;\ge\; \alpha(\phi)\cdot\frac{2}{\beta(\phi)}\,L(\phi)
\;=\; \frac{2\alpha(\phi)}{\beta(\phi)}\,L(\phi).
\]

\emph{Upper PL bound:}
\[
\langle \nabla L(\phi),\delta\rangle
\;\le\; \beta(\phi)\,\|\delta\|^2
\;\le\; \beta(\phi)\cdot\frac{2}{\alpha(\phi)}\,L(\phi)
\;=\; \frac{2\beta(\phi)}{\alpha(\phi)}\,L(\phi),
\]

\end{proof}

\section{Detailed Proofs for Section~\ref{sec:convergence}}

\subsection{Proof of Lemma~\ref{lem:ray-invariance}}
\begin{proof}
The NGD update is
\[
\phi_{k+1} = \phi_k - \eta_k H(\phi_k)^{-1}\nabla L(\phi_k).
\]
From the identity $H(\phi)^{-1}\nabla L(\phi)=\phi-\phi^*$,
\[
\phi_{k+1} = \phi_k - \eta_k(\phi_k-\phi^*).
\]
Rearrange:
\[
\phi_{k+1} = \phi^* + (1-\eta_k)(\phi_k-\phi^*).
\]
Subtracting $\phi^*$ gives $\delta_{k+1}=(1-\eta_k)\delta_k$.
Thus $\delta_{k+1}$ is always a scalar multiple of $\delta_k$,
so every iterate remains on the line through $\phi^*$ and $\phi_0$.
Finally, taking norms:
\[
\|\delta_{k+1}\| = |1-\eta_k|\|\delta_k\|.
\]
\end{proof}
\subsection{Proof of Theorem~\ref{thm:ngd-contraction}}
\begin{proof}
With constant step $\eta$, the recursion is
$\delta_{k+1}=(1-\eta)\delta_k$. Iterating $k$ times:
\[
\delta_k = (1-\eta)^k \delta_0.
\]
Hence $\|\delta_k\| = |1-\eta|^k \|\delta_0\|$.
If $\eta=1$, then $\delta_1=0$, so $\phi_1=\phi^*$ after one step.
\end{proof}

\subsection{Proof of Corollary~\ref{cor:ngd-func}}
\begin{proof}
From the quadratic bounds of Section~5:
\[
\frac{\alpha(\phi)}{2}\|\delta\|^2 \leq L(\phi) \leq \frac{\beta(\phi)}{2}\|\delta\|^2.
\]
Applying this at $k$ and $k+1$, and using
$\|\delta_{k+1}\|=|1-\eta|\|\delta_k\|$, we obtain
\[
\frac{L(\phi_{k+1})}{L(\phi_k)}
\leq \frac{\tfrac{1}{2}\beta(\phi_{k+1})\|\delta_{k+1}\|^2}{\tfrac{1}{2}\alpha(\phi_k)\|\delta_k\|^2}
= \frac{\beta(\phi_{k+1})}{\alpha(\phi_k)}|1-\eta|^2.
\]
When $0<\eta\leq 1$, the nesting property of Section~5 implies
$\beta(\phi_{k+1})\leq \beta(\phi_0)$ and $\alpha(\phi_k)\geq \alpha(\phi_0)$.
Thus
\[
L(\phi_{k+1}) \leq \kappa_0 |1-\eta|^2 L(\phi_k),\qquad
\kappa_0=\beta(\phi_0)/\alpha(\phi_0).
\]
Iterating this inequality yields the claimed bound.
\end{proof}

\subsection{Proof of Corollary~\ref{cor:ngd-diminishing}}
\begin{proof}
The recursion $\delta_{k+1}=(1-\eta_k)\delta_k$ gives
\[
\|\delta_k\| = \Big(\prod_{i=0}^{k-1}|1-\eta_i|\Big)\|\delta_0\|.
\]
Using $\log(1-x)\leq -x$ for $x\in(0,1]$,
\[
\prod_{i=0}^{k-1}(1-\eta_i) \leq \exp\!\Big(-\sum_{i=0}^{k-1}\eta_i\Big).
\]
Thus
\[
\|\delta_k\| \leq \exp\!\Big(-\sum_{i=0}^{k-1}\eta_i\Big)\|\delta_0\|.
\]
The function-value bound follows by combining with the quadratic bounds
as in the previous corollary.
For $\eta_i=c/i$, note $\sum_{i=1}^{k} c/i = c\log k + O(1)$,
so $\|\delta_k\|=O(k^{-c})$ and $L(\phi_k)=O(k^{-2c})$.

\end{proof}
\subsection{Proof of Theorem~\ref{thm:gd-contraction}}
\begin{proof}
Let $H(\phi_k)=Q\Lambda Q^\top$ with $\Lambda=\mathrm{diag}(\lambda_1,\dots,\lambda_d)$
and $Q$ orthogonal. Then
\[
\delta_{k+1} = (I-\gamma_k H(\phi_k))\delta_k
= Q(I-\gamma_k\Lambda)Q^\top \delta_k.
\]
Taking norms:
\[
\|\delta_{k+1}\| \leq \|I-\gamma_k\Lambda\|_2 \|\delta_k\|
= \max_i |1-\gamma_k\lambda_i| \,\|\delta_k\|.
\]
Since $\lambda_i\in[\alpha(\phi_k),\beta(\phi_k)]$, the worst-case factor is
\[
\rho_k(\gamma_k)=\max\{|1-\gamma_k\alpha(\phi_k)|,\;|1-\gamma_k\beta(\phi_k)|\}.
\]
If $0<\gamma_k<2/\beta(\phi_k)$ then $\rho_k<1$. Minimizing over $\gamma_k$
gives $\gamma_k^*=2/(\alpha+\beta)$ and $\rho_k^*=(\beta-\alpha)/(\beta+\alpha)$.

\end{proof}
\subsection{Proof of Corollary~\ref{cor:gd-func}}
\begin{proof}
Apply the quadratic bounds at $k$ and $k+1$:
\[
L(\phi_{k+1}) \leq \tfrac{1}{2}\beta(\phi_{k+1})\|\delta_{k+1}\|^2,\qquad
L(\phi_k) \geq \tfrac{1}{2}\alpha(\phi_k)\|\delta_k\|^2.
\]
Using $\|\delta_{k+1}\|\leq \rho_k\|\delta_k\|$ from Theorem~\ref{thm:gd-contraction},
\[
\frac{L(\phi_{k+1})}{L(\phi_k)} \leq \frac{\beta(\phi_{k+1})}{\alpha(\phi_k)}\rho_k^2.
\]
This establishes the result.
\end{proof}


\appendix

\end{document}